\documentclass[11pt]{article}
\usepackage{style}
\usepackage[left=1in,bottom=1in,top=1in,right=1in]{geometry}
\hypersetup{%
	colorlinks   = true, %
	urlcolor     = blue, %
	linkcolor    = blue, %
	citecolor    = blue,  %
}

\title{From Non-Convex to Strongly Convex:\\
Curvature-Adaptive FTPL for Online Optimization}
\author{Moses Charikar\thanks{Stanford University. Email: \texttt{moses@cs.stanford.edu.}}
\and
Chirag Pabbaraju\thanks{Stanford University. Email: \texttt{cpabbara@cs.stanford.edu.}}
\and
Ambuj Tewari\thanks{University of Michigan, Ann Arbor. Email: \texttt{tewaria@umich.edu.}}
}
\date{\today}

\graphicspath{{img/}}

\begin{document}

\maketitle

\begin{abstract}
Curvature adaptivity is a classical theme in online optimization: for convex Lipschitz losses, adaptive methods interpolate between the optimal $O(\sqrt{T})$ regret for general convex losses and $O(\log T)$ regret under strong convexity. Recent work has shown that Follow-the-Perturbed-Leader (FTPL) achieves optimal $O(\sqrt{T})$ regret even for online non-convex Lipschitz losses, assuming access to an approximate offline-optimization oracle, but these guarantees do not exploit curvature. We show that FTPL can be made curvature-adaptive in the non-convex setting, without knowing in advance how curvature will accumulate over time. Our algorithm replaces the fixed perturbation scale of standard FTPL with a time-varying scale chosen using only past information. We give a simple follow-the-leader tuning rule for this scale and show that it competes, up to constants, with the best choice in hindsight. The resulting method achieves $O(\sqrt{T})$ regret for arbitrary non-convex Lipschitz losses and improves as cumulative curvature grows; with sufficiently accurate oracle calls, it achieves $O(\log T)$ regret when cumulative curvature grows linearly, which includes the classical strongly convex regime. We complement these upper bounds with matching lower bounds for prescribed cumulative-curvature sequences, already for one-dimensional convex losses, showing that the tradeoff between worst-case non-convex regret and curvature-driven fast rates is intrinsic.
\end{abstract}

\section{Introduction}

A central question in online optimization is how regret scales with the number of rounds $T$. Two classical results characterize this dependence: for convex and Lipschitz losses, the optimal regret is $O(\sqrt{T})$~\citep{zinkevich2003online}, while for strongly convex and Lipschitz losses, it improves to $O(\log T)$~\citep{hazan2006logarithmic}. Both rates are known to be tight~\citep{abernethy2008optimal}. 

A natural goal, therefore, is to design algorithms whose guarantees adapt to the curvature of the loss sequence. In a seminal work,~\citet{bartlett2007adaptive} introduced adaptive gradient methods for online \emph{convex} optimization whose regret seamlessly interpolates between $O(\sqrt{T})$ and $O(\log T)$ depending on the cumulative strong convexity in the sequence. These ideas have since influenced widely used methods such as Adagrad~\citep{duchi2011adaptive} and Adam~\citep{kingma2015adam}. More recently,~\citet{suggala2020online} showed that the Follow-the-Perturbed-Leader (FTPL) framework achieves $O(\sqrt{T})$ regret even for \emph{non-convex} Lipschitz losses, assuming access to an oracle that solves perturbed offline problems.

These results naturally raise the following question:
\begin{quote}
\emph{Can FTPL be made adaptive to curvature, achieving $O(\sqrt{T})$ regret for general non-convex Lipschitz losses while improving to $O(\log T)$ when sufficient strong convexity is present?}
\end{quote}

In this paper, we answer this question in the affirmative. Our approach is based on introducing a time-varying noise scale in the FTPL perturbations, chosen adaptively using only past information. We first derive a general regret bound parameterized by this sequence of noise scales. We then show that selecting these parameters can itself be cast as a meta online learning problem, for which a simple FTL approach achieves a constant competitive ratio. The resulting algorithm achieves regret that interpolates between $O(\sqrt{T})$ and $O(\log T)$, without prior knowledge of the curvature of the losses. We further complement our upper bounds with matching lower bounds, showing that the bounds we meta-optimize are intrinsic to the problem and not an artifact of our method.

Beyond its theoretical interest, our setting is motivated by a broad class of problems where the learner encounters composite losses of the form
\[
f_t = g_t + r_t,
\]
where $g_t$ is a data-dependent term that may be non-convex, and $r_t$ is a strongly convex regularizer. Such composite structures are ubiquitous in modern machine learning and optimization.

One prominent example arises in continual learning~\citep{kirkpatrick2017overcoming}, where $g_t$ represents a task-specific loss, typically non-convex due to neural network parameterizations, and $r_t$ is a regularizer that \emph{tethers the current iterate to previously learned models}, thereby mitigating catastrophic forgetting. In contrast, in online robust regression~\citep{maronna2019robust}, even with a linear model, the loss $g_t$ can be non-convex when using robust losses such as the Tukey biweight function, while $r_t$ is typically a strongly convex shrinkage term (e.g., $\ell_2$ regularization) that \emph{tethers the iterate to a fixed reference point}, such as the origin.

In these settings, the effective curvature of the loss sequence depends on the interplay between the non-convex data term and the accumulated regularization. In favorable regimes, the regularizer dominates and the problem behaves like a strongly convex one; in adversarial regimes, the non-convexity prevails. Since this balance is typically unknown a priori, it is essential to design algorithms that automatically adapt, achieving $O(\sqrt{T})$ regret in the worst case while improving to $O(\log T)$ whenever sufficient curvature emerges.

Our key contributions are as follows.
\begin{itemize}
\item \textbf{Curvature-adaptive FTPL across non-convex to strongly convex regimes.}
We design an FTPL algorithm with a time-varying perturbation scale that adapts to the curvature of the loss sequence, achieving regret that interpolates from $O(\sqrt{T})$ for arbitrary non-convex Lipschitz losses to $O(\log T)$ when sufficient strong convexity is present. To our knowledge, this is the first FTPL method that achieves such curvature adaptivity in the non-convex regime, bridging the gap between prior adaptive methods for online convex optimization (e.g.,~\citep{bartlett2007adaptive}) and non-convex but non-adaptive FTPL guarantees (e.g.,~\citep{suggala2020online}).

\item \textbf{Meta-online tuning and optimality guarantees.}
We derive a regret bound parameterized by the perturbation schedule and show that selecting this schedule can be cast as a meta online learning problem. We give a simple constant-competitive strategy for this problem, yielding near-optimal tuning without prior knowledge of the loss sequence. We also prove matching lower bounds, showing that the bounds we meta-optimize are intrinsic to the problem.

\item \textbf{Empirical validation.}
We empirically evaluate our method on representative problems with composite non-convex and strongly convex structure, demonstrating improved performance over non-adaptive FTPL and baseline methods in regimes with varying curvature.
\end{itemize}
\section{Related Works}

\paragraph{Oracles in Online Optimization.} The FTPL algorithm~\citep{hannan1957approximation,kalai2005efficient} is especially attractive for efficient online learning over structured decision sets. The main computational burden in FTPL at each round is solving a perturbed offline problem. If the cost functions and perturbation are both linear, then the perturbed offline problem becomes an instance of the offline optimization problem itself. However, for general decision sets, there is known to be an exponential (in the ambient dimension $d$) gap in the oracle complexities of online and statistical learning if the learner is only provided access to the offline optimization oracle~\citep{hazan2016computational}. 
For decision classes with exponentially many experts in the ambient dimension, the lower bound of \citet{hazan2016computational} implies that no algorithm using only a standard offline optimization oracle and polynomial additional computation can achieve $O(\mathrm{poly}(d)T^\alpha)$ regret for any $\alpha<1$.
It was therefore surprising when \citet{agarwal2019learning} showed that FTPL with linear perturbations, which makes one call per time step to a \emph{linearly perturbed} offline optimization oracle, achieves $O(\mathrm{poly}(d)T^{2/3})$ cumulative regret against \emph{non-convex} Lipschitz functions on a  bounded decision set. The $O(\mathrm{poly}(d)T^{2/3})$ rate was later improved to $O(\mathrm{poly}(d)\sqrt{T})$ by \citet{suggala2020online}. 

\paragraph{Adaptive Step-Sizes and Regularization.} Online gradient algorithms with different step-size schedules achieve $O(\sqrt{T})$ and $O(\log T)$ regret against convex Lipschitz and strongly convex Lipschitz functions respectively~\citep{zinkevich2003online,hazan2006logarithmic}. \citet{bartlett2007adaptive} designed a single adaptive algorithm that interpolates between these rates and automatically adapts to the actual amount of curvature present in the adversary's \emph{convex} functions. Some related lower bounds can be found in the work of~\citet{abernethy2008optimal}. Subsequent work developed a broader theory of adaptive regularization and adaptive first-order methods, including AdaGrad-style algorithms~\citep{duchi2011adaptive,mcmahan2010adaptive} and adaptive FTRL/mirror-descent frameworks~\citep{mcmahan2017survey}. These ideas later influenced practical optimization methods for deep learning such as Adam~\citep{kingma2015adam}. While adaptive step-size methods are also widely studied for stochastic non-convex optimization~\citep{li2019convergence,ward2020adagrad}, their guarantees are typically formulated in terms of convergence to stationary points rather than regret minimization. Also note that, as observed by \citet{agarwal2019learning}, standard FTRL-style regularization does not necessarily stabilize the minimizers of non-convex objectives, limiting the direct applicability of FTRL-based approaches for provable regret guarantees in the non-convex regime.

\paragraph{Adaptive Perturbation.}
Adaptive variants of FTPL have previously been studied through adaptive learning-rate schedules, data-dependent perturbation magnitudes, and generalized FTPL frameworks for oracle-efficient online learning~\citep{hutter2005adaptive,wang2022adaptive}. Existing works primarily focus on adapting to favorable data sequences, for example when the loss of the best expert is small or the adversary is stochastic, while retaining the classical $O(\sqrt{T})$ dependence on the horizon. In contrast, our goal is to adapt to curvature itself. To the best of our knowledge, no prior online optimization algorithm, let alone one that is perturbation-based for oracle-efficiency,  achieves a full interpolation between the optimal $O(\sqrt{T})$ regret for Lipschitz non-convex losses and the $O(\log T)$ regret achievable for strongly convex losses.
\section{Adaptive FTPL}
\label{sec:adaptive-ftpl}

\paragraph{Problem Setup.} The setup comprises of an online game between an \textit{adversary} and a \textit{learner} that proceeds in rounds. At round $t$, the learner chooses some $x_t \in \mcX$, where $\mcX$ is a bounded space; simultaneously, the adversary picks a function $f_t:\mcX \to \R$. Both the learner's and adversary's choices at round $t$ can depend on the entire history of the game up until round $t-1$. Thereafter, the choices made by the learner and adversary are revealed to each other, and the learner suffers a loss $f_t(x_t)$. The learner's objective is to minimize their \textit{regret} at the end of the game; that is, the learner aims to minimize the cumulative loss they suffer at the end of $T$ rounds, relative to the single best $x \in \mcX$ they could have played at each round in hindsight. Formally, define the regret of a sequence of moves $x_{1:T} := x_1,\dots,x_T$ with respect to a sequence of functions $f_{1:T} := f_1,\dots,f_T$ as
\begin{align}
    \label{eqn:regret-def}
    \regret_T(f_{1:T}; x_{1:T}) := \sum_{t=1}^T f_t(x_t) - \inf_{x \in \mcX} \sum_{t=1}^T f_t(x).
\end{align}
We will assume that $\mcX \subseteq \R^d$ with $\ell_\infty$ diameter $D < \infty$. We will also assume that the functions $f_t$ that the adversary can play are guaranteed to be $L$-Lipschitz with respect to the $\ell_1$ norm. That is, for every $x,y \in \mcX$, and for every $t$, $|f_t(x)-f_t(y)| \le L\|x-y\|_1$. 

\paragraph{Curvature.} We say that a function $f: \mcX \to \R$ has strong convexity constant $\mu > 0$ (with respect to $\ell_1$) if for every $x, y \in \mcX$, it holds that
\begin{align}
    \label{eqn:strong-convexity}
    f(y) \ge f(x) + \langle\nabla f(x), y-x\rangle + \frac{\mu}{2}\|y-x\|_1^2.
\end{align}
We use the convention that $\mu=0$ if \eqref{eqn:strong-convexity} above does not hold for $f$ for any $\mu > 0$.

\paragraph{Approximate Optimization Oracle.} As in \citet{suggala2020online}, we assume that the learner has access to an ``$(\alpha,\beta)$-approximate optimization oracle''. Given a function $f:\mcX \to \R$ and a vector $\sigma \in \R^d$, the oracle $\mcO_{\alpha, \beta}(f, \sigma)$ returns $\hat{x} \in \mcX$ satisfying
\begin{align}
    \label{eqn:approximate-optimization-oracle}
    f(\hat{x}) - \langle \sigma, \hat{x} \rangle \le \inf_{x \in \mcX} \left(f(x)-\langle \sigma, x \rangle \right) + (\alpha + \beta\|\sigma\|_1).
\end{align}

We can now state our main Adaptive FTPL algorithm, which takes as input the oracle $\mcO_{\alpha, \beta}$, and a non-decreasing schedule $\rho_1,\rho_2,\dots$ of \textit{noise scales} to invoke the oracle with.

\begin{algorithm}[H]
\caption{Adaptive Follow the Perturbed Leader (AdaFTPL)}
\label{alg:ftpl}
    \begin{algorithmic}[1]
      \State \textbf{Input:} Approximate optimization oracle $\mathcal{O}_{\alpha, \beta}$, non-decreasing noise scale schedule $\rho_1,\rho_2\dots,$
      \For{$t = 1 \dots T$}
      \State Generate random vector 
      $\sigma_t$ such that $\{\sigma_{t,j}\}_{j = 1}^d \stackrel{\mathrm{i.i.d.}}{\sim} \text{Exp}(1)\footnotemark$
      \State Let $F_{t-1}=\sum_{i=1}^{t-1}f_i$\, ; predict $x_t$ as
      \[
      x_t = \mcO_{\alpha, \beta} \left(F_{t-1},\, \rho_t \sigma_t \right)
      \]
      \State Observe loss function $f_t$
      \EndFor
    \end{algorithmic}
\end{algorithm}
\footnotetext{A random variable $X \sim \mathrm{Exp}(\eta)$ if for every $s \ge 0$, $\Pr[X \ge s]=\exp(-\eta s)$.}

We are now ready to state our main regret guarantee for Adaptive FTPL. We note that throughout the paper, we will use the convention $1/0=+\infty$.

\begin{restatable}[Adaptive FTPL Regret]{theorem}{mainTheorem}
    \label{thm:main}
    Let $D$ be the $\ell_\infty$ diameter of $\mcX$, and suppose that the functions $f_1,\dots,f_T$ encountered by Adaptive FTPL (\Cref{alg:ftpl}) are $L$-Lipschitz with respect to the $\ell_1$ norm. Furthermore, let $S_t$ be the strong convexity constant %
    of $F_{t} = \sum_{i=1}^{t}f_i$. Then, the algorithm has the following regret guarantee:
    \begin{align}
        \label{eqn:main-regret-bound}
         \E_\sigma\left[\regret_T(f_{1:T};x_{1:T})\right]\le &\sum_{t=1}^T \min \left\{\frac{125L^2d^2D}{\rho_t} + \frac{\alpha+\beta \rho_t d}{20} + 2\beta d L, \frac{2L^2}{S_t}+2L\sqrt{\frac{2(\alpha + \beta\rho_t d)}{S_t}}\right\} \nonumber \\
         &\qquad + d D\rho_T+ \alpha T +  d\beta \sum_{t=1}^T \rho_t.
    \end{align}
\end{restatable}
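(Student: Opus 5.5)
The plan is the standard FTPL ``be-the-leader plus stability'' decomposition, adapted to time-varying noise scales and adding a second, curvature-based stability bound. Define the idealized iterate $x_t^\star(\sigma)$ as the oracle output on $(F_{t-1},\rho_t\sigma)$; with a single fresh vector $\sigma$ shared across rounds, the expected regret does not change, since each $x_t$ depends on $\sigma_t$ only through its marginal law. Write
\[
\regret_T = \sum_{t=1}^T \bigl(f_t(x_t)-f_t(x_{t+1})\bigr) + \Bigl(\sum_{t=1}^T f_t(x_{t+1}) - \inf_x F_T(x)\Bigr).
\]
The second term is handled by the usual be-the-perturbed-leader induction, using $\rho_{t+1}\ge\rho_t$ to telescope the perturbation terms. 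Each step has the form $F_t(x_{t+1})-\rho_{t+1}\langle\sigma,x_{t+1}\rangle \le F_t(x_{t+2})-\rho_{t+1}\langle\sigma,x_{t+2}\rangle+\alpha+\beta\rho_{t+1}\|\sigma\|_1$. After this, changing $\rho_{t+1}$ to $\rho_{t+2}$ costs $(\rho_{t+2}-\rho_{t+1})\langle\sigma,x_{t+2}-x_0\rangle$. Summing gives at most $\E[\|\sigma\|_1] D\rho_T = dD\rho_T$, plus $\alpha T + d\beta\sum_t\rho_t$ from the oracle slack (using $\E\|\sigma\|_1=d$). This produces the three additive terms outside the minimum.

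For the per-round stability term $\E[f_t(x_t)-f_t(x_{t+1})]$ I would prove two separate bounds and take their minimum. \textbf{Bound 1 (non-convex, perturbation-based):} follow \citet{suggala2020online}. Bound $\E[f_t(x_t)-f_t(x_{t+1})]\le L\sum_j \E|x_{t,j}-x_{t+1,j}|$. Then, for each coordinate $j$, use the monotonicity lemma: increasing $\sigma_j$ increases the $j$th coordinate of the approximate minimizer, up to the oracle slack. Combine this with the memorylessness of the exponential distribution: shifting $\sigma_j$ by about $100Ld/\rho_t$ flips the comparison between $F_{t-1}$ and $F_t$, and this shift happens with probability $1-e^{-O(Ld/\rho_t)}$. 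The outcome is a coordinatewise bound of order $LdD/\rho_t$ plus slack terms. Here $x_{t+1}$ uses noise scale $\rho_{t+1}$, not $\rho_t$. I would therefore compare against an auxiliary iterate with scale $\rho_t$ and objective $F_t$, and push the scale mismatch into the be-the-leader telescoping above. This yields $\frac{125L^2d^2D}{\rho_t}+\frac{\alpha+\beta\rho_t d}{20}+2\beta dL$, with the constants coming from the choice of shift.

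\textbf{Bound 2 (curvature):} here I use that $F_{t-1}-\rho_t\langle\sigma,\cdot\rangle$ and $F_t-\rho_t\langle\sigma,\cdot\rangle$ differ by the $L$-Lipschitz $f_t$, and that $F_t$ is $S_t$-strongly convex. Let $\bar x$ be the exact minimizer of $F_t-\rho_t\langle\sigma,\cdot\rangle$. Strong convexity together with the oracle guarantee gives $\frac{S_t}{2}\|x_{t+1}-\bar x\|_1^2\le \alpha+\beta\rho_t\|\sigma\|_1$. Next, $x_t$ is near-optimal for $F_t-f_t-\rho_t\langle\sigma,\cdot\rangle$, so $\frac{S_t}{2}\|x_t-\bar x\|_1^2\le L\|x_t-\bar x\|_1+\alpha+\beta\rho_t\|\sigma\|_1$. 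This gives $\|x_t-\bar x\|_1\lesssim 2L/S_t+\sqrt{2(\alpha+\beta\rho_t d)/S_t}$. Multiplying by $L$, using the triangle inequality, and applying Jensen to move $\E$ inside the square root gives $\frac{2L^2}{S_t}+2L\sqrt{2(\alpha+\beta\rho_td)/S_t}$. Constants must be tracked carefully to match exactly.

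The main obstacle is Bound 1 under time-varying $\rho$: the Suggala--Netrapalli monotonicity argument needs the same noise scale in both iterates. The plan is to define the telescoping with ``half-step'' iterates $\tilde x_{t+1}=\mcO(F_t,\rho_t\sigma)$, and to charge the scale change $\rho_t\to\rho_{t+1}$ entirely to the $dD\rho_T$ term. Both stability bounds then concern $x_t$ versus $\tilde x_{t+1}$ at a common scale $\rho_t$, which lets the minimum be taken per round.
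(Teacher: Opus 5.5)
Once you switch to the half-step iterate $\tilde x_{t+1}=\mcO_{\alpha,\beta}(F_t,\rho_t\sigma)$, as in your last paragraph, your plan is exactly the paper's proof with its ghost iterate $\bar x_{t+1}$. The paper likewise runs be-the-leader on $F_t-\rho_t\langle\sigma,\cdot\rangle$ so the accumulated scale changes cost at most $D\|\sigma\|_1\rho_T$, applies the Suggala--Netrapalli stability bound at the common scale $\rho_t$, and gets the curvature bound from the exact minimizer of $F_t-\rho_t\langle\sigma,\cdot\rangle$, the triangle inequality and Jensen. Write the decomposition and Bound~2 with $\tilde x_{t+1}$ from the start: for the actual iterate $x_{t+1}=\mcO_{\alpha,\beta}(F_t,\rho_{t+1}\sigma)$, the inequality $\frac{S_t}{2}\|x_{t+1}-\bar x\|_1^2\le\alpha+\beta\rho_t\|\sigma\|_1$ fails, because $x_{t+1}$ near-minimizes a differently tilted objective.
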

We give the complete proof of \Cref{thm:main} in \Cref{sec:proofs-adaptive-ftl}, but provide a detailed sketch here.

\paragraph{Proof Sketch.} By a standard argument for FTPL (e.g., see Lemma 12 in \citet{hutter2005adaptive}, Chapter 4 in \citet{Cesa-Bianchi_Lugosi_2006}), it suffices to consider the setting where the functions $f_1,\dots,f_T$ are specified upfront by an \textit{oblivious} adversary, and show the regret bound for every fixed $f_1,\dots,f_T$. With obliviously specified functions, it furthermore suffices in \Cref{alg:ftpl} to simply draw a random vector $\sigma=\{\sigma_j\}_{j=1}^d \stackrel{\mathrm{i.i.d.}}{\sim} \text{Exp}(1)$ once at the start and reuse it at every time step, instead of drawing a fresh $\sigma_t$ at each step. With these considerations, our goal is to upper bound the regret in the left-hand side of \eqref{eqn:main-regret-bound} for fixed $f_1,\dots,f_T$, where the expectation is only with respect to $\sigma$ drawn at the start by the algorithm.

The key object that enables the analysis is a ``ghost'' move $\bar{x}_{t+1}= \mcO_{\alpha, \beta}(F_{t}, \rho_t \sigma)$. Note that $\bar{x}_{t+1}$ is separate from $x_{t}=\mcO_{\alpha, \beta}(F_{t-1},\, \rho_t\sigma)$ which the algorithm plays at time $t$, \textit{as well as} $x_{t+1}=\mcO_{\alpha, \beta}(F_t,\, \rho_{t+1}\sigma)$ which the algorithm plays at time step $t+1$. We introduce $\bar{x}_{t+1}$ solely for the sake of analysis, and for the reason that it uses a noise vector whose scale \textit{matches} that of the noise vector used in the \textit{previous} time step. This will help us (1) establish a standard decomposition of the regret into a \textit{stability} and \textit{perturbation} term, and (2) directly utilize the bound on the stability term from \citet{suggala2020online}.

Namely, we show that
\begin{align}
    \label{eqn:stability-perturbation-decomposition-sketch}
    \E_\sigma\left[\regret_T(f_{1:T};x_{1:T})\right] \le \underbrace{L \sum_{t=1}^T \E_\sigma \left[ \left\| x_t - \bar{x}_{t+1}\right\|_1 \right]}_{\text{stability}} + \underbrace{d D\rho_T+ \alpha T +  d\beta \sum_{t=1}^T \rho_t.}_{\text{perturbation}}
\end{align}
This decomposition follows a standard analysis once the ghost object $\bar{x}_{t+1}$ has been defined. The stability term arises directly from Lipschitzness. In order to bound the rest of the perturbation terms, we crucially observe that $\bar{x}_{t+1}$ is a move that the learner would have \textit{liked to play} at time step $t$, but cannot do so, since it does not know $f_t$. Thus, we can employ a standard ``be-the-leader'' analysis for $\bar{x}_{t+1}$ to bound the perturbation terms in the desired form.

It then remains to bound the stability terms $\E_{\sigma}\left[\|x_t - \bar{x}_{t+1}\|_1\right]$. We do this in two ways, allowing us to interpolate between non-convexity and strong convexity. First, we observe again that both $x_t$ and $\bar{x}_{t+1}$ use a noise vector having the same scale $\rho_t$; the only difference is that $x_t$ invokes the oracle with $F_{t-1}$, whereas $\bar{x}_{t+1}$ invokes the oracle with $F_t$. But this means that we can directly use the analysis of \citet{suggala2020online} --- in their setting, they upper-bound $\E_{\sigma}\left[\|x_t - x_{t+1}\|_1\right]$ where both $x_t$ and $x_{t+1}$ use the same noise vector; the same upper bound goes through for us, provided we replace $x_{t+1}$ with $\bar{x}_{t+1}$.

Second, we use a different upper bound on $\|x_t - \bar{x}_{t+1}\|$ arising from the analysis of follow-the-leader (FTL) for strongly convex online optimization. Namely, if we assume that the strong convexity constant of the cumulative function $F_t$ is $S_t$, then $\|x_t - \bar{x}_{t+1}\|$ can be upper-bounded as $O(L/S_t)$. This gives a different, curvature-cognizant upper bound on the perturbation term. This bound is vacuous when $F_t$ is not strongly convex (given that we are using the convention $1/0=+\infty$), but it nevertheless always applies. 

Finally, given that we have two upper bounds for the stability term, both of which always hold, we can use the \textit{minimum} of these two. This gives us the final regret upper bound in \Cref{thm:main}. \qed

\subsection{A meta-online learning problem to choose $\rho_t$}

Observe that the regret bound in \Cref{thm:main} holds for any non-decreasing $\rho_1,\dots,\rho_T$. We now consider the problem of optimally choosing this sequence. A natural way to do this is to optimize the upper bound that we obtain, as a function of $\rho_1,\dots,\rho_T$.

Namely, for any horizon $T$, and error parameters $\alpha, \beta$, define the \textit{offline} optimal value of the regret upper bound over all possible non-decreasing noise scale schedules $\rho_1 \le \dots \le \rho_T$ that are lower-bounded by $r \ge 0$ (i.e., $r \le \rho_1$), as:
\begin{align}
    \label{eqn:offline-opt-rho}
    \opt^{\alpha,\beta}_T(r) := \inf_{r \le \rho_1 \le \dots \le \rho_T} &\left[\sum_{t=1}^T \min\left\{\frac{125 L^2 d^2 D}{\rho_t}+\frac{\beta \rho_t d}{20}+2L\beta d + \frac{\alpha}{20}, \frac{2L^2}{S_t}+2L\sqrt{\frac{2(\alpha+\beta \rho_t d)}{S_t}}\right\} \right.\nonumber\\
    &\qquad\qquad\left.+\, dD\rho_T + \alpha T + d\beta \sum_{t=1}^T\rho_t\right].
\end{align}
When $\alpha=\beta=0$, we denote the quantity above simply as $\opt^{0}_T(r)$. In this case, a single consistent choice of $\rho$ at all time steps suffices, i.e.,
\begin{align}
    \opt^0_{T}(r) = \inf_{\rho \ge r} \left[ \sum_{t=1}^{T}\min \left\{\frac{125L^2d^2D}{\rho}, \frac{2L^2}{S_t}\right\} + dD\rho\right].
\end{align}
This follows by noting that, for every feasible $r \le \rho_1 \le \dots \le \rho_T$, we can simply set $\rho_t=\rho_T$ for every $t$; this keeps the sequence feasible, and does not increase any individual term inside the summation.

Given this, a natural $\rho_t$ that the algorithm can choose, solely as a function of the information it has seen so far, is $\opt^0_{t-1}(r)$. This is the value of $\rho_t$ that would realize the best offline regret in hindsight, had the game been stopped before round $t$, and corresponds to an FTL move itself in the space of noise schedules. This choice of $\rho_t$ turns out to be non-decreasing; more interestingly, we can show that the regret of \Cref{alg:ftpl} instantiated with this choice of $\rho_t$ is constant-factor competitive with $\opt^{\alpha,\beta}_T(r)$.

\begin{restatable}[FTL for Noise Scale Schedule]{theorem}{metaFTL}
    \label{thm:meta-ftl}
    Fix $r >0$. Let $\rho_1=r$, and for $t \ge 2$,
    \begin{align}
        \label{eqn:rho-FTL}
        \rho_{t} = \min \arginf_{\rho \ge r} \left[ \sum_{i=1}^{t-1}\min \left\{\frac{125L^2d^2D}{\rho}, \frac{2L^2}{S_i}\right\} + dD\rho\right],
    \end{align}
    Then,
    \begin{enumerate}
        \item $\rho_1 \le \dots \le \rho_T$, and every $\rho_t$ depends only on $S_1,\dots,S_{t-1}$.
        \item For this noise scale schedule, assuming that the error parameter $\beta \le c/T$, \Cref{alg:ftpl} has the following regret guarantee:
        \begin{align*}
            \E_\sigma\left[\regret_T(f_{1:T};x_{1:T})\right]\le C_1 \cdot \opt^{0}_T(r) \le C_2 \cdot \opt^{\alpha, \beta}_T(r).
        \end{align*}
         Here, $C_1$ and $C_2$ are constants depending only on $d, D, L, r$, and $c$.
    \end{enumerate}
\end{restatable}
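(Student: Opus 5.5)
Write $A := 125L^2d^2D$, $B := dD$, and $m_i(\rho) := \min\{A/\rho,\, 2L^2/S_i\}$. Then $\Phi_t(\rho) := \sum_{i\le t} m_i(\rho) + B\rho$ and $\rho_{t+1} = \min\arginf_{\rho\ge r}\Phi_t(\rho)$.

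\textbf{Monotonicity.} I would use a single-crossing argument. Each $m_i$ is non-increasing in $\rho$. Hence $\Phi_{t}(\rho) - \Phi_{t-1}(\rho) = m_t(\rho)$ is non-increasing. Take $\rho' < \rho_t$ (the smallest minimizer of $\Phi_{t-1}$). Then $\Phi_{t-1}(\rho') > \Phi_{t-1}(\rho_t)$ and $m_t(\rho') \ge m_t(\rho_t)$. Adding gives $\Phi_t(\rho') > \Phi_t(\rho_t)$, so every minimizer of $\Phi_t$ is at least $\rho_t$.

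Existence of a minimizer follows from continuity and coercivity ($B\rho\to\infty$). Since $S_i$ depends only on $f_1,\dots,f_i$, each $\rho_t$ depends only on $S_1,\dots,S_{t-1}$. This gives part 1.

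\textbf{Regret vs.\ $\opt^0_T(r)$.} I would plug this schedule into \Cref{thm:main}. With $\beta\le c/T$ and $\rho_t\le\rho_T$, the $\beta$-terms contribute $O(c\,d\rho_T + cdL)$. The $\alpha$-terms contribute $\alpha T(1+1/20)$ plus the $\sqrt{\alpha}$ term. These are absorbed because $\opt^{\alpha,\beta}_T$ itself contains $\alpha T$; for the first inequality I would assume $\alpha=0$ or treat these terms as additive, which is the one place where the statement's constants need care. The main term is the be-the-leader/FTL-type sum $\sum_t m_t(\rho_t) + B\rho_T$. The inner minimum in \Cref{thm:main} is at most $m_t(\rho_t)$ plus lower-order $\beta$ pieces, plus the square-root term, which is bounded the same way.

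\textbf{Key step (main obstacle).} I must show $\sum_t m_t(\rho_t) + B\rho_T \le C\,\Phi_T(\rho^*)$ with $\rho^* = \rho_{T+1}$. This is an FTL-on-a-scalar argument, and I would do it by potentials.

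First, since $\rho_T$ minimizes $\Phi_{T-1}$, we have $B\rho_T \le \Phi_{T-1}(\rho_T) \le \Phi_{T-1}(\rho^*) \le \Phi_T(\rho^*)$.

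For the sum, I would use the standard FTL bound $\sum_t m_t(\rho_t) \le \Phi_T(\rho_{T+1}) + \sum_t [\Phi_t(\rho_t) - \Phi_t(\rho_{t+1})]$, which telescopes as be-the-leader. I then need to bound each gap $\Phi_t(\rho_t) - \Phi_t(\rho_{t+1}) \le m_t(\rho_t) - m_t(\rho_{t+1})$.

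The gaps are not small in general, so instead I would bound $m_t(\rho_t) \le A/\rho_t$ directly and compare to the optimum structure. At the minimizer of $\Phi_{t-1}$, the first-order condition gives $B\rho_t^2 \approx A\cdot n_{t-1}$, where $n_{t-1}$ counts indices whose $A/\rho$ branch is active. Thus $A/\rho_t \approx \sqrt{AB/n_{t-1}}$, and $\sum_t$ over the steps that increment $n$ gives $O(\sqrt{ABn_T}) = O(\Phi_T(\rho^*))$. Steps where $2L^2/S_t$ is active contribute exactly their value, which is matched by the same term at $\rho^*$ up to the monotone comparison. The $r$-dependence enters through the first step's $A/r$, which is where $C_1$ depends on $r$.

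\textbf{Second inequality.} Finally, $\opt^0_T(r)\le C\,\opt^{\alpha,\beta}_T(r)$ follows termwise. The objective defining $\opt^{\alpha,\beta}$ dominates that of $\opt^0$ at the same schedule, after replacing it with the constant schedule $\rho_T$.
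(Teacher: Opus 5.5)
Your part 1, the bound $B\rho_T\le\Phi_T(\rho^*)$, the absorption of the $\beta$-terms, and the second inequality all follow the paper. Your caveat about $\alpha$ is also fair: the paper's proof likewise bounds only the clean part $\sum_t m_t(\rho_t)+B\rho_T$ by a multiple of $\opt^0_T(r)$, and the full regret by a multiple of $\opt^{\alpha,\beta}_T(r)$.

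The gap is in your key step. You wrote down the right FTL inequality and the right per-step bound $\Phi_t(\rho_t)-\Phi_t(\rho_{t+1})\le m_t(\rho_t)-m_t(\rho_{t+1})$, but then abandoned it because individual gaps can be large. What you are missing is that their \emph{sum} telescopes. Since $\rho_t\le\rho_{t+1}$ (your part 1) and $x\mapsto\min\{x,2L^2/S_t\}$ is non-decreasing and $1$-Lipschitz, you get $m_t(\rho_t)-m_t(\rho_{t+1})\le A/\rho_t-A/\rho_{t+1}$. Hence $\sum_{t=1}^T\bigl(m_t(\rho_t)-m_t(\rho_{t+1})\bigr)\le A/\rho_1=A/r$. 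Plugging this into your FTL inequality gives $\sum_t m_t(\rho_t)\le\Phi_T(\rho^*)+A/r$. Finally, $A/r\le\frac{A}{Br^2}\,\Phi_T(\rho^*)=\frac{125L^2d}{r^2}\,\Phi_T(\rho^*)$, because $\Phi_T(\rho^*)\ge B\rho^*\ge Br$. This is the paper's argument, and it is exactly where the $r$-dependence of the constant enters.

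The first-order-condition argument you substitute does not close the gap. The ``$\approx$'' is already loose: at $\rho_t=r$ you only get $B\rho_t^2\ge An$, and $\Phi_{t-1}$ is not convex. More seriously, the claim that rounds on the $2L^2/S_t$ branch are ``matched by the same term at $\rho^*$ up to the monotone comparison'' has the inequality backwards. Since $\rho_t\le\rho^*$ and $m_t$ is non-increasing, $m_t(\rho_t)\ge m_t(\rho^*)$. Let $b_t=AS_t/(2L^2)$ be the breakpoint. Rounds with $\rho_t<b_t\le\rho^*$ pay $2L^2/S_t<A/\rho_t\le A/r$, but the benchmark charges them only $A/\rho^*$. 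Because index $t$ is inactive when played, such rounds need not increment your count $n$. So they are covered neither by the $\sum_k\sqrt{AB/k}$ bound nor by the matching. For instance, $\rho_t$ can stay at $r$ for roughly $Br^2/A$ rounds while indices with $b_t$ just above $r$ arrive, each paying about $A/r$ with $n=0$ throughout. Controlling these rounds requires something like the telescoping above.

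A smaller point: ``bounded the same way'' for the square-root term should be made explicit. When $2L^2/S_t\le A/\rho_t$, that term equals $2\sqrt{m_t(\rho_t)(\alpha+\beta\rho_t d)}$, and Cauchy--Schwarz together with the clean bound then absorbs it.
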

We note that $\rho_t$ as given by \eqref{eqn:rho-FTL} can be efficiently computed, provided that we are given the values $S_1,\dots,S_{t-1}$. To see this, for each $S_i$, consider $b_i = \frac{125L^2d^2D}{2L^2/S_i}$ (where $b_i=0$ if $S_i=0$), and sort the $b_i$ values, so that $b_{i_1} \le b_{i_2} \le \dots \le b_{i_{t-1}}$. Furthermore, set $b_{i_0}=0$ and $b_{i_t}=\infty$, and consider the intervals $[\tilde{b}_{i_0},b_{i_1}),[\tilde{b}_{i_1},b_{i_2}),\dots, [\tilde{b}_{i_{t-1}}, b_{i_t})$, where $\tilde{b}_{i_j}=\max(b_{i_j}, r)$. For any $\rho \in [\tilde{b}_{i_j}, b_{i_{j+1}})$, observe that the objective function in \eqref{eqn:rho-FTL} evaluates to
\begin{align*}
    \frac{125jL^2d^2D}{\rho} + dD\rho + \sum_{k < t:b_{i_k} > \rho} \frac{2L^2}{S_i}.
\end{align*}
Note that the last term is constant for every $\rho \in [\tilde{b}_{i_j}, b_{i_{j+1}})$. Thus, the ``unconstrained'' minimizer for this interval balances the first two terms, giving 
\begin{align*}
    \hat{\rho}_j = \sqrt{125jL^2d}.
\end{align*}
This is a valid value for $\hat{\rho}_j$, provided $\sqrt{125jL^2d} \in [\tilde{b}_{i_j}, b_{i_{j+1}})$. However, if $\sqrt{125jL^2d} \notin [\tilde{b}_{i_j}, b_{i_{j+1}})$, we simply set $\hat{\rho}_j$ to be either $\tilde{b}_{i_j}$ or $b_{i_{j+1}}$, based on whichever of the two evaluates to a smaller objective in \eqref{eqn:rho-FTL}. Finally, we set $\rho_t$ to be the $\hat{\rho}_j$ value which realizes the smallest objective among all $\hat{\rho}_j$'s.

We give the complete proof of \Cref{thm:meta-ftl} in \Cref{sec:proofs-adaptive-ftl}, which goes through a be-the-leader analysis again. Namely, it considers the regret of the algorithm, had it chosen $\rho_{t+1}$ instead of $\rho_t$ at time step $t$ (it cannot do so, since it does not know $f_t$). Thereafter, it bounds the cost of instead having played $\rho_t$, by using the specific form of the objective function.

\Cref{thm:meta-ftl} allows us to derive precise regret guarantees of \Cref{alg:ftpl}, with the choice of $\rho_t$ given in \eqref{eqn:rho-FTL}, for different curvature regimes.

\begin{restatable}[Curvature-adaptive Regret]{corollary}{regimeSpecificRates}
    \label{corollary:precise-regimes}
    Consider \Cref{alg:ftpl} instantiated with the noise schedule $\rho_t$ given in \eqref{eqn:rho-FTL}, and suppose that $\alpha,\beta = O(1/T)$. Then, the following guarantee always holds:
    \begin{align*}
        \E_{\sigma}[\regret_T(f_{1:T};x_{1:T})] \le O(\sqrt{T}).
    \end{align*}
    Additionally, if the cumulative curvature satisfies $S_t \ge t^{\gamma}$, we have that
    \begin{align*}
        \E_{\sigma}[\regret_T(f_{1:T};x_{1:T})] &=
        \begin{cases}
            O(\sqrt{T}), & \gamma \le \frac12, \\
            O(T^{1-\gamma}), & \frac12 < \gamma < 1, \\
            O(\log T), & \gamma = 1, \\
            O(1), & \gamma > 1.
        \end{cases}
    \end{align*}
\end{restatable}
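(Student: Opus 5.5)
The plan is to combine \Cref{thm:meta-ftl} with direct evaluation of $\opt^0_T(r)$ under the stated curvature assumptions. Since $\beta = O(1/T)$, the hypothesis $\beta \le c/T$ of \Cref{thm:meta-ftl} holds. Hence the expected regret is at most $C_1 \cdot \opt^0_T(r)$, with $C_1$ depending only on $d, D, L, r, c$. (The $\alpha$-dependence has already been absorbed into $C_1$ by the theorem; if one prefers to stay with the second inequality, the $\alpha T$ term is $O(1)$ anyway.) So it suffices to exhibit, in each regime, a single $\rho \ge r$ with
\[
\sum_{t=1}^T \min\left\{\frac{125L^2d^2D}{\rho}, \frac{2L^2}{S_t}\right\} + dD\rho
\]
of the claimed order. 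Throughout, $d, D, L, r$ are treated as constants.

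\textbf{Worst-case bound.} Drop the curvature branch of the minimum and take $\rho = \max\{r, \sqrt{125 L^2 d T}\}$. The objective is then at most $125L^2d^2DT/\rho + dD\rho = O(\sqrt{T})$. This gives the unconditional $O(\sqrt{T})$ guarantee, and with it the case $\gamma \le 1/2$.

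\textbf{Curvature regimes.} Suppose $S_t \ge t^\gamma$. For any threshold $\tau$, split the sum at $\tau$: for $t \le \tau$ use the first branch, and for $t > \tau$ use the bound $2L^2 t^{-\gamma}$. The objective is at most
\[
\frac{125L^2d^2D\,\tau}{\rho} + 2L^2\sum_{t>\tau}^{T} t^{-\gamma} + dD\rho .
\]
\begin{itemize}
\item For $1/2<\gamma<1$: choose $\tau = T^{2-2\gamma}$ (which is at most $T$) and $\rho = \max\{r,\sqrt{\tau}\}$. The tail sum is at most $\sum_{t\le T} t^{-\gamma} = O(T^{1-\gamma})$, and the other two terms are $O(\sqrt{\tau}) = O(T^{1-\gamma})$.
\item For $\gamma = 1$: take $\rho = r$ and $\tau = 0$. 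The tail is $2L^2\sum_t 1/t = O(\log T)$, and $dDr = O(1)$.
\item For $\gamma > 1$: again take $\rho = r$. The tail is $2L^2\zeta(\gamma) = O(1)$.
\end{itemize}
The cleanest form of the argument is to pick the balancing $\tau$ directly; alternatively one can simply take $\tau=0$ whenever $\gamma\ge1$. In either case we minimize over $\rho$ only in the upper direction, which is legitimate because $\opt^0_T(r)$ is an infimum.

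\textbf{Main obstacle.} The only real subtlety is that \Cref{thm:meta-ftl}'s constants $C_1, C_2$ must not depend on $T$ or on the curvature sequence. We take this from the theorem as stated. One should also check that the constraint $\rho \ge r$ never hurts: it only adds an additive $dDr = O(1)$. Beyond that, the remaining work is routine integral comparisons $\sum_{t\le T} t^{-\gamma} \lesssim \int_1^T s^{-\gamma}\,ds$.
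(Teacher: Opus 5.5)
Your strategy is the paper's: reduce to \Cref{thm:meta-ftl}, then exhibit a constant $\rho$ and evaluate the offline objective. Your regime computations are correct, but the reduction step is mis-justified. You say the $\alpha$-dependence ``has already been absorbed into $C_1$''. The theorem cannot deliver that: $C_1$ depends only on $d,D,L,r,c$, and $\opt^0_T(r)$ does not involve $\alpha$ at all, whereas the bound of \Cref{thm:main} being controlled carries an additive $\alpha T$. In fact, the proof of \Cref{thm:meta-ftl} only establishes $\E_\sigma[\regret_T]\le C_2\cdot\opt^{\alpha,\beta}_T(r)$, absorbing $\alpha T$ via $\alpha T\le\opt^{\alpha,\beta}_T(r)$. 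This is why the paper's proof of the corollary bounds $\opt^{\alpha,\beta}_T(r)$ and uses $\alpha=O(1/T)$ there. Your route can be repaired in a few lines. In that proof, the error term $E_T=\frac{21\beta d}{20}\sum_t\rho_t+2L\beta dT+\frac{21\alpha T}{20}$ is $O(\opt^0_T(r))$ once $\beta\le c/T$ and $\alpha\le c'/T$, because $dD\rho_T\le C\cdot\opt^0_T(r)$ by \eqref{eqn:ftl-clean-sequence-regret-ub} and $\opt^0_T(r)\ge dDr$. So the first inequality does hold in the corollary's setting, with $C_1$ now also depending on $c'$. It is precisely at this step that the hypothesis $\alpha=O(1/T)$ does real work.

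If you instead use the second inequality, as the paper does, note that $\opt^{\alpha,\beta}_T(r)$ contains more error terms than $\alpha T$. These are the per-round $\frac{\beta\rho d}{20}+2L\beta d+\frac{\alpha}{20}$ in the non-convex branch, the cross term $2L\sqrt{2(\alpha+\beta\rho d)/S_t}$ in the strongly convex branch, and $d\beta T\rho$. With $\alpha,\beta=O(1/T)$ and constant $\rho\ge r$, they add $O\bigl(1+\rho+\sqrt{(1+\rho)/T}\,\sum_{t\le T}t^{-\gamma/2}\bigr)$. For your choices of $\rho$ this is at most of the order of the main term in every regime; for example, it is $O(T^{1-\gamma})$ for $\rho=T^{1-\gamma}$ when $\frac12<\gamma<1$. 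Your split at $\tau=T^{2-2\gamma}$ is unnecessary. You bound the tail by the full sum $\sum_{t\le T}t^{-\gamma}=O(T^{1-\gamma})$ anyway, so $\tau=0$ and $\rho=r$ already give the rate, which is what the paper does (with $r=1$). On the other hand, your use of $\rho=\max\{r,\cdot\}$ handles the constraint $\rho\ge r$ more cleanly than the paper's fixing $r=1$, since $r$ is an input to the algorithm.
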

The proof of \Cref{corollary:precise-regimes} is a calculation upper-bounding $\opt^{\alpha, \beta}_T(r)$  in each of the regimes, and is given in \Cref{sec:proofs-adaptive-ftl}.

It is instructive to compare the regime-wise guarantees above to the regime-wise guarantees given in Corollary 3.3 in \citet{bartlett2007adaptive} for online \textit{convex} optimization. Specifically, \citet{bartlett2007adaptive} consider the setting where every individual $f_t$ has curvature $t^{-\lambda}$ (which implies that $S_t \approx t^{1-\lambda}$), and do casework on $\lambda$. Mapping their cases to our cases above yields identical regret bounds as theirs. In this way, \Cref{corollary:precise-regimes} generalizes their guarantees by analyzing different regimes of \textit{cumulative} curvature, which allows non-convex functions too.

Lastly, we comment about the setting of the error parameters $\alpha, \beta$ in our results. \Cref{thm:meta-ftl} required no condition on $\alpha$, but required $\beta=O(1/T)$. On the other hand, \Cref{corollary:precise-regimes} required $\alpha=O(1/T)$ as well. Similar conditions on the error parameters were required by \cite{suggala2020online} as well. In particular, \cite{suggala2020online} required setting $\alpha=O(1/\sqrt{T})$ and $\beta=O(1/T)$ in order to guarantee $O(\sqrt{T})$ regret. This suffices for us as well for the purposes of getting a uniform $O(\sqrt{T})$ regret guarantee. However, since we are aiming for $o(\sqrt{T})$ regret in the cases where curvature is ample, we require $\alpha$ to be smaller. This is made clear by considering the expression \eqref{eqn:offline-opt-rho} for $\opt^{\alpha, \beta}_T(r)$, which explicitly contains an $\alpha T$ term.

\section{Lower Bound}
\label{sec:lower-bound}

The previous section showed that the regret of our algorithm, instantiated with the suggested noise scale schedule, is at most $O(\opt^0_T(r))$. In fact, it holds that $\opt^{0}_T(r) \le \opt^{0}_T(0) + dDr$ (see \Cref{claim:relating-opt-0-r-to-opt-0-0}), which means that, holding $r,D,L$ fixed, the regret of our algorithm grows with $T$ as $O(\opt^{0}_T(0))$. We will now show that a regret of $\Omega(\opt^{0}_T(0))$ is necessary in the worst case for \textit{any} algorithm. This shows that the $O(\opt^{0}_T(0))$ regret bound that we obtained above is not just a consequence of our analysis, but is in fact inherent. Moreover, it also shows that our algorithm achieves a minimax optimal regret.

Our lower bound holds for any given sequence of cumulative curvatures satisfying $0 = s_0 \le s_1 \le s_2 \le \dots \le s_T$. In particular, this means that an $\Omega(\opt^{0}_T(0))$ regret is necessary  already for a sequence of \textit{convex} functions.  Given such a curvature sequence, we derive a sequence of functions $f_1,\dots,f_T$ that are $L$-Lipschitz on $\mcX$, and have cumulative curvature $S_t=s_t$ for all $t$. %

More concretely, define the minimax regret for a cumulative curvature sequence $s_{1:T}=s_1,\dots,s_T$ to be
\begin{align}
    \mfR_T(s_{1:T}) := \inf_{\mcA} \sup_{\substack{f_1,\ldots,f_T\text{ } L\text{-Lipschitz}\text{ on }\mcX \\ S_t=s_t\text{ for all }t}}
  \E_{\mcA} \left[\regret_T(f_{1:T};x_{1:T})\right].
\end{align}

\begin{restatable}[Curvature-specific Lower Bound]{theorem}{lowerBound}
    \label{thm:lower-bound}
    Fix the dimension $d=1$. There is a universal constant $c > 0$, such that for every $T \ge 1$ and every non-decreasing sequence $0=s_0 \le s_1 \le \dots \le s_T$ satisfying $s_t-s_{t-1} \le \frac{L}{2D}$ for all $t \in [T]$,
    \begin{align}
        \mfR_T(s_{1:T}) &\ge c \cdot \inf_{\rho \ge 0} \left[\sum_{t=1}^{T}\min \left\{\frac{L^2D}{\rho}, \frac{L^2}{s_t}\right\} + D\rho\right] = \Omega\left( \opt^{0}_T(0) \right).
    \end{align}
\end{restatable}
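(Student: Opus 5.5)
The plan is to lower bound $\mfR_T(s_{1:T})$ by the larger of two separate constructions, each matching one part of the infimum. First I simplify the right-hand side. For $\rho>0$ let $k(\rho)=\#\{t: s_t<\rho/D\}$. Since $s_t$ is non-decreasing, these are the first $k(\rho)$ rounds, and
\[
\sum_{t=1}^{T}\min\Big\{\tfrac{L^2D}{\rho},\tfrac{L^2}{s_t}\Big\}+D\rho \;\asymp\; \frac{k(\rho)L^2D}{\rho}+\sum_{t>k(\rho)}\frac{L^2}{s_t}+D\rho .
\]
Let $k^\star$ be the largest $k$ with $s_k\le L\sqrt{k}/D$ (with $k^\star=0$ if there is none), and take $\rho^\star\approx L\sqrt{k^\star}$. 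This choice shows that the infimum is $O\big(LD\sqrt{k^\star}+\sum_{t>k^\star}L^2/s_t\big)$. Since a sum of two terms is at most twice their maximum, it suffices to exhibit adversaries forcing regret $\Omega(LD\sqrt{k^\star})$ and $\Omega(\sum_{t>k^\star}L^2/s_t)$ respectively. Each adversary must realize $S_t=s_t$ exactly and stay $L$-Lipschitz. Throughout, take $\mcX=[0,D]$ and $\delta_t=s_t-s_{t-1}$. Every $f_t$ will be of the form $f_t(x)=\frac{\delta_t}{2}(x-D/2)^2-g_t x$ with $|g_t|\le L/2$. The hypothesis $\delta_t\le L/(2D)$ gives $|f_t'|\le \delta_t D/2+L/2\le L$, and $F_t$ has curvature exactly $s_t$, because the linear parts do not affect curvature.

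\emph{Low-curvature phase.} In the first $k^\star$ rounds I set $g_t=\frac{L}{2}\epsilon_t$ with independent Rademacher signs $\epsilon_t$; in the remaining rounds I set $g_t=0$. The learner's expected loss from the linear parts is zero. The comparator minimizes $\frac{s_T}{2}(x-D/2)^2-\frac{L}{2}G\,x$, where $G=\sum_{t\le k^\star}\epsilon_t$. Quadratic terms that the comparator pays at $x$ the learner also essentially pays, since $F_T$ is the same quadratic for both. The issue is that the quadratic pulls the comparator toward the centre, so it can only profit from the drift. I would therefore compare against the best of the two endpoints and use $E|G|\gtrsim\sqrt{k^\star}$. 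The endpoint gains $\frac{LD}{4}|G|$ from the linear part. It pays at most $O(s_{k^\star}D^2)\le O(LD\sqrt{k^\star})$ from the quadratic accumulated up to round $k^\star$, and the learner's own quadratic cost is nonnegative relative to the centre. The constants in the definition of $k^\star$ will be shrunk so that the drift term dominates. One subtlety remains: curvature accumulated after round $k^\star$ also penalizes the endpoint comparator. I would handle this by centring the later quadratics at the comparator's endpoint, chosen after the fact. This is legitimate for a lower bound via Yao's principle, since the adversary is allowed to depend on the realized signs.

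\emph{High-curvature phase.} This is the Bayesian lower bound in the style of \citet{abernethy2008optimal}, adapted to arbitrary increments. I split the rounds $t>k^\star$ into blocks on which $s_t$ roughly doubles. On each block, the $g_t$ are i.i.d.\ signs with a hidden bias $p$ drawn from a prior whose width is tuned to the block. The target is that in round $t$ the Bayes-optimal learner has squared error $\Omega(L^2/s_t^2)$ in locating the minimizer of $F_t$. Strong convexity of $F_t$, together with the telescoping identity for follow-the-leader on quadratics, converts that squared error into $\Omega(L^2/s_t)$ regret per round. Summing over blocks then gives $\Omega(\sum_{t>k^\star}L^2/s_t)$, using that $s_t\ge L\sqrt{t}/D$ there, so that the implied minimizers stay inside $[0,D]$.

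\emph{Main obstacle.} I expect the high-curvature phase to be the hardest step. The naive i.i.d.-sign construction yields only $\sum_t L^2/s_T$, not $\sum_t L^2/s_t$. So the prior has to be rescaled block by block, and I must check two things: that the posterior uncertainty in each block is genuinely of order $1/s_t$, and that the blocks' contributions do not cancel through the comparator. Keeping the hidden optimum inside $\mcX$ is exactly where the defining property of $k^\star$ and the increment bound $\delta_t\le L/(2D)$ enter.
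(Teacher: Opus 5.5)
Your reduction to separately forcing $\Omega(LD\sqrt{k^\star})$ and $\Omega(\sum_{t>k^\star}L^2/s_t)$ is valid. The low-curvature phase also works: Khintchine gives $\E|G|\ge\sqrt{k^\star/2}$, so the endpoint's linear gain $\frac{LD}{4}\E|G|$ beats its quadratic cost $\frac{s_{k^\star}D^2}{8}\le\frac{LD\sqrt{k^\star}}{8}$, and re-centring the later quadratics at the chosen endpoint is legitimate.

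The gap is in the high-curvature phase, and it is not bookkeeping: the conversion you rely on is false. For quadratics with interior minimizers, the telescoping identity is $\sum_t f_t(x_t)-\min F_T=\sum_t\bigl[\frac{s_t}{2}(x_t-x^\star_t)^2-\frac{s_{t-1}}{2}(x_t-x^\star_{t-1})^2\bigr]$. The negative term lets the learner cash in on any predictable part of the leader's move. Write $\beta_t=s_t(x^\star_t-x^\star_{t-1})=\delta_t(D/2-x^\star_{t-1})+g_t$, let $b_t,v_t$ be its conditional mean and variance, and let $u=x_t-x^\star_{t-1}$. The conditional round-$t$ term is $\frac{\delta_t}{2}u^2-ub_t+\frac{b_t^2+v_t}{2s_t}$, and its minimum over $u$ is $\frac{v_t}{2s_t}-\frac{s_{t-1}b_t^2}{2\delta_t s_t}$.

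Your naive construction is a counterexample to ``squared error $\Omega(L^2/s_t^2)$ implies regret $\Omega(L^2/s_t)$ per round.'' There every learner has conditional squared error $\ge L^2/(4s_t^2)$. Yet always playing $D/2$, which is exactly the minimizer $u=b_t/\delta_t$, has regret at most $\E[(\sum_t g_t)^2]/(2s_T)=TL^2/(8s_T)$. For $s_t=\delta t$ this is $O(L^2/\delta)$, not $\Omega(L^2\log T/\delta)$. The culprit is the predictable pull $\delta_t(D/2-x^\star_{t-1})$ toward the fixed centre, not the prior.

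A hidden bias does not cancel this pull. It only adds a further predictable component $m_t=\E[g_t\mid\text{past}]$ to $b_t$. On rounds with small $\delta_t$ (flat stretches of $s_t$), the learner can then gain up to order $|m_t|D$ by leaning toward an endpoint, so tuning priors per block does not help.

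Separately, once the leader moves without predictable drift (which the above shows you need), it is a martingale with steps of size $\Theta(L/s_t)$. The condition $s_t>L\sqrt t/D$ only bounds its variance $\Theta(L^2\sum_{t>k^\star}s_t^{-2})$ by $D^2\ln(T/k^\star)$, for example when $s_t\approx 2L\sqrt t/D$. So the minimizers do not stay in $[0,D]$ for free.

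The missing idea is to remove the predictable drift exactly, by centring each round at the current regularized leader, and to damp the steps. The paper plays $f_t(x)=\frac{a_t}{2}(x-z_{t-1})^2-\ell_t(x-z_{t-1})$ with $\ell_t=\frac{L}{8}\eps_t$ and $z_t=z_{t-1}+\ell_t/(s_t+\gamma)$. Then the learner's conditional expected loss is $\frac{a_t}{2}(x_t-z_{t-1})^2\ge0$. Meanwhile $G_t=F_t+\frac{\gamma}{2}x^2$ equals $G_t(z_t)+\frac{s_t+\gamma}{2}(x-z_t)^2$ with $G_t(z_t)=G_{t-1}(z_{t-1})-\frac{\ell_t^2}{2(s_t+\gamma)}$. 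Hence $-\min_{\mcX}F_T\ge -G_T(z_T)=\sum_t\frac{\ell_t^2}{2(s_t+\gamma)}$ whenever $z_T\in\mcX$.

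Choose $\gamma_\star$ to minimize $D^2\gamma+L^2\sum_t(s_t+\gamma)^{-1}$. The first-order condition gives $\sum_t(s_t+\gamma_\star)^{-2}\le D^2/L^2$. This is exactly what Kolmogorov's maximal inequality needs to keep the centres in $\mcX$ with probability $3/4$, with the walk frozen otherwise. The same condition also gives $D^2\gamma_\star\le L^2\sum_t(s_t+\gamma_\star)^{-1}$. Here $\gamma_\star$ plays the role of your cutoff $\rho^\star/D$, so this one construction covers both of your phases and no split at $k^\star$ is needed.
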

As discussed above, since our algorithm always has a regret guarantee of $O(\opt^{0}_T(0))$, we thus have that the minimax regret satisfies $\mfR_T(s_{1:T})=\Theta(\opt^{0}_T(0))$.

The entire proof of \Cref{thm:lower-bound} comprises of several intricate steps, and we defer the details to \Cref{sec:proofs-lower-bound}. Here, we highlight some salient aspects of the construction. Previous work by \citet{abernethy2008optimal} shows a minimax lower bound of $\Omega(\sqrt{T})$ for generic online convex optimization, as well as a lower bound of $\Omega(\log T)$ for strongly convex online optimization. These constructions use (1) a sequence of random linear functions for the general convex case, and (2) a sequence of quadratics with drifting centers for the strongly convex case. 

Neither of these constructions, by themselves, directly extend to our case, where we wish to show a lower bound for an arbitrary sequence of non-negative and non-decreasing cumulative curvatures. Nevertheless, a natural strategy would be to carefully mix in both the above components. That is, we choose every $f_t$ to comprise of a random linear term, together with a quadratic function whose center $z_t$ follows a suitably chosen random process. The curvature constraints imply that the coefficient on the quadratic term ought to be $a_t=s_t-s_{t-1}$. The main technical innovation in the analysis is the construction of fictitious functions $G_t(x)$ that are regularized forms of the cumulative functions $F_t(x)$. These functions enable lower-bounding the regret of any algorithm by relating the best-in-hindsight loss on $F_T$ to the telescoping summation of $G_t(z_t)-G_{t-1}(z_{t-1})$.

Finally, in \Cref{sec:regime-specific-lower-bounds}, we complement the regime-specific regret upper bounds given in \Cref{corollary:precise-regimes} with matching lower bounds. This shows that the guarantees for our algorithm given in \Cref{corollary:precise-regimes} are optimal.
\section{Simulations}
\label{sec:simulations}

\begin{figure}[t]
	\begin{subfigure}{.245\textwidth}
	\centering
	\includegraphics[width=1\linewidth]{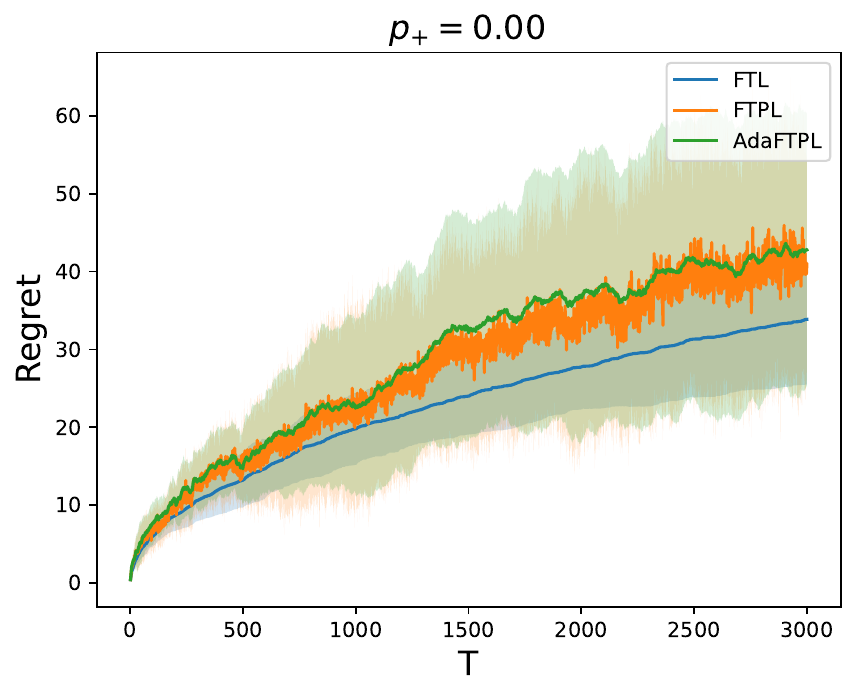}  
	\caption{}
	\label{fig:ftl_d=10_0.00}
\end{subfigure}
	\begin{subfigure}{.245\textwidth}
	\centering
	\includegraphics[width=1\linewidth]{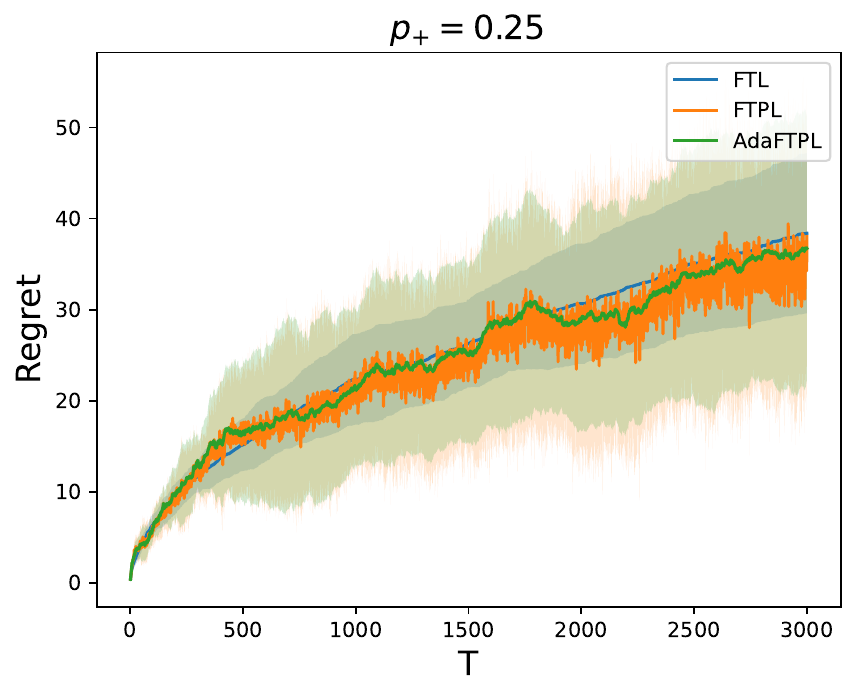}  
	\caption{}
	\label{fig:ftl_d=10_0.25}
\end{subfigure}
	\begin{subfigure}{.245\textwidth}
	\centering
	\includegraphics[width=1\linewidth]{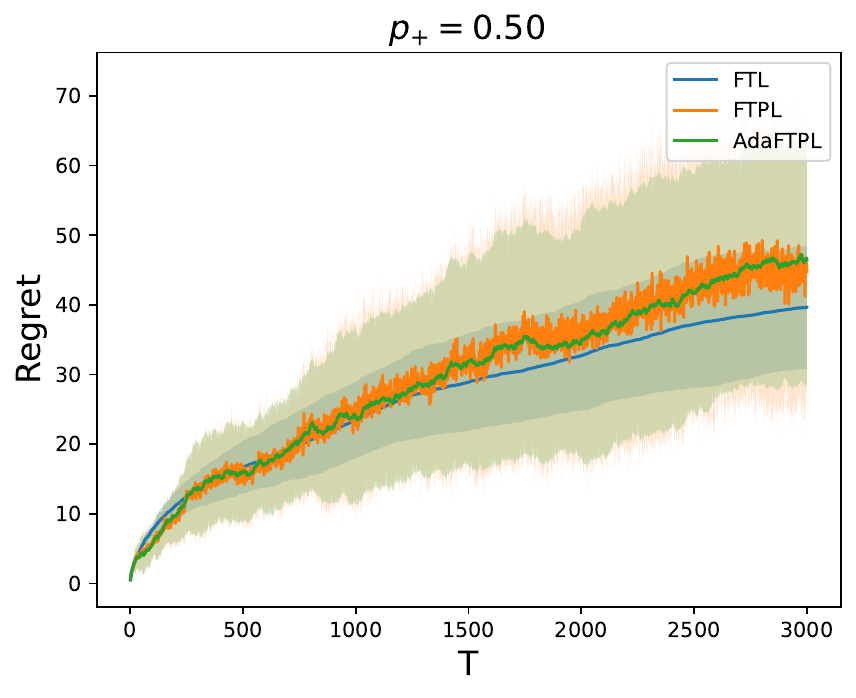}  
	\caption{}
	\label{fig:ftl_d=10_0.50}
\end{subfigure}
	\begin{subfigure}{.245\textwidth}
	\centering
	\includegraphics[width=1\linewidth]{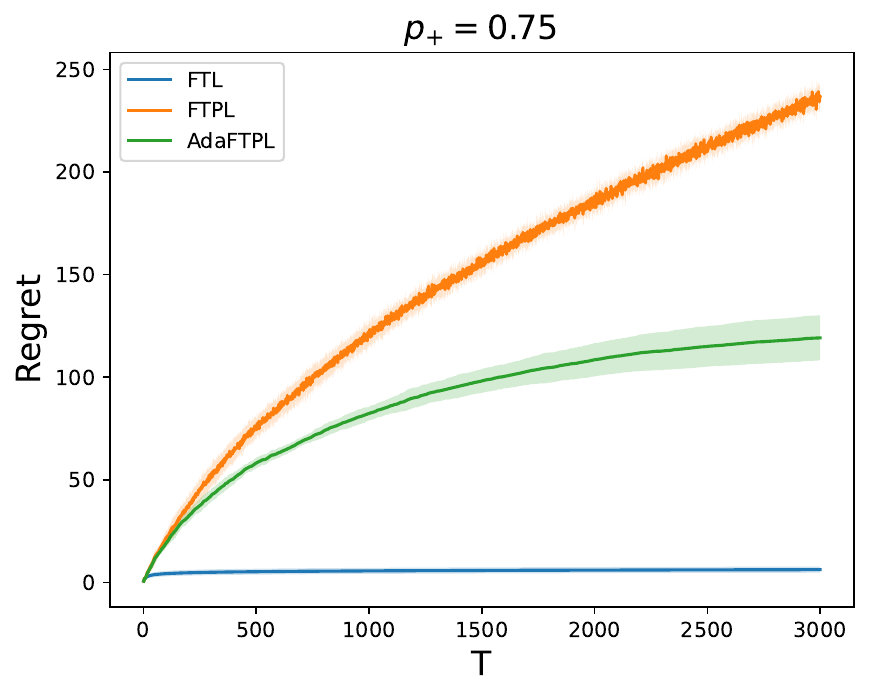}  
	\caption{}
	\label{fig:ftl_d=10_0.75}
\end{subfigure}
        \newline
    \begin{subfigure}{.245\textwidth}
	\centering
	\includegraphics[width=1\linewidth]{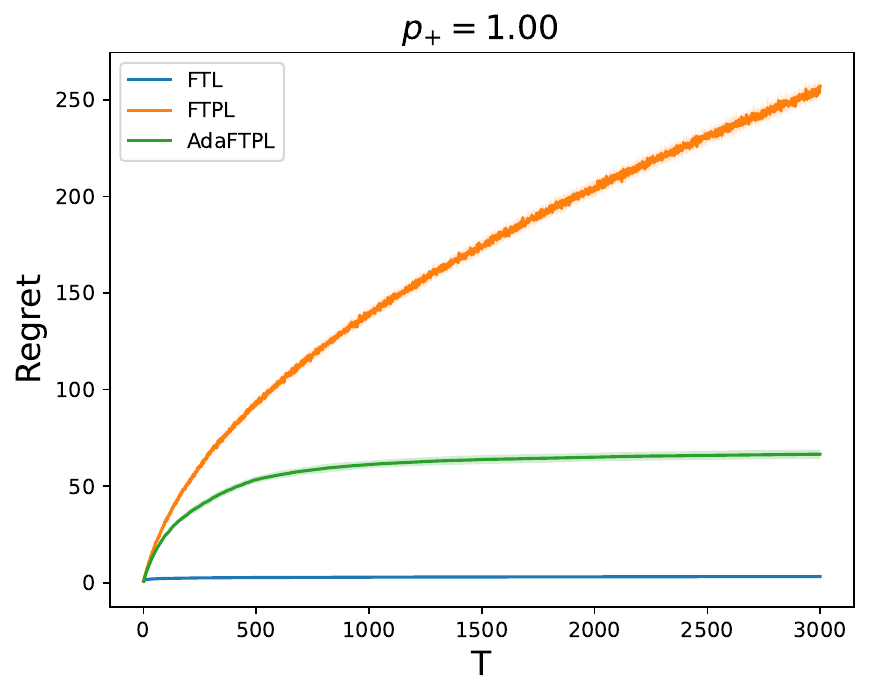}  
	\caption{}
	\label{fig:ftl_d=10_1.00}
\end{subfigure}
	\begin{subfigure}{.245\textwidth}
	\centering
	\includegraphics[width=1\linewidth]{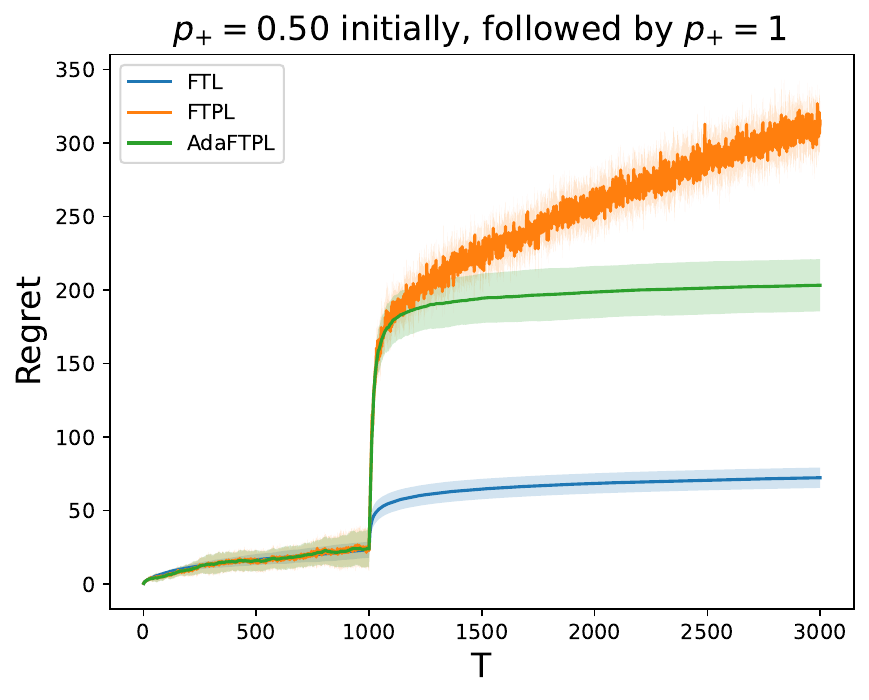}  
	\caption{}
	\label{fig:ftl_middle_d=10_0.50}
\end{subfigure}
	\begin{subfigure}{.245\textwidth}
	\centering
	\includegraphics[width=1\linewidth]{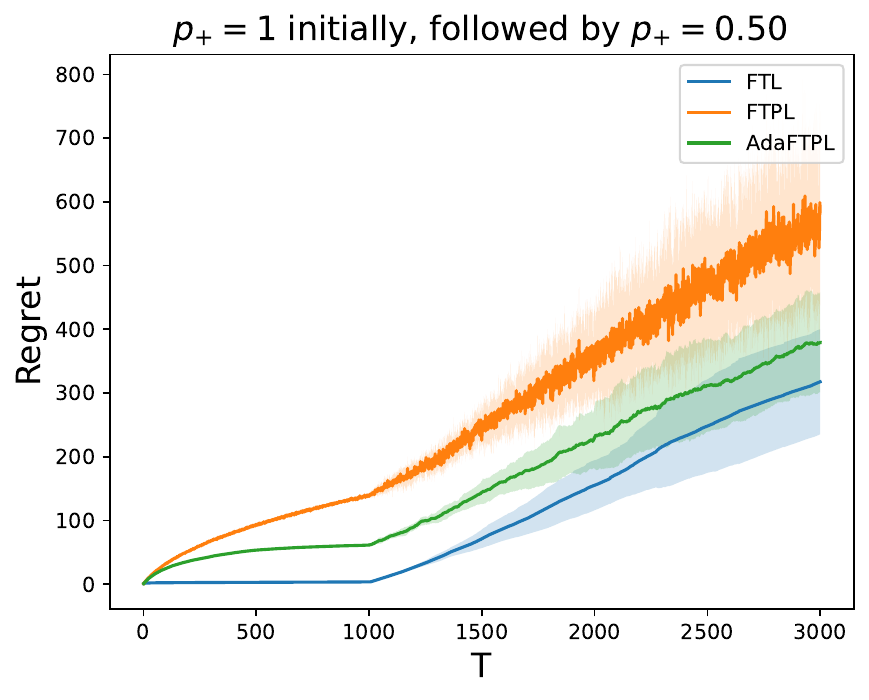}  
	\caption{}
	\label{fig:ftl_reverse_middle_d=10_0.50}
\end{subfigure}
	\begin{subfigure}{.245\textwidth}
	\centering
	\includegraphics[width=1\linewidth]{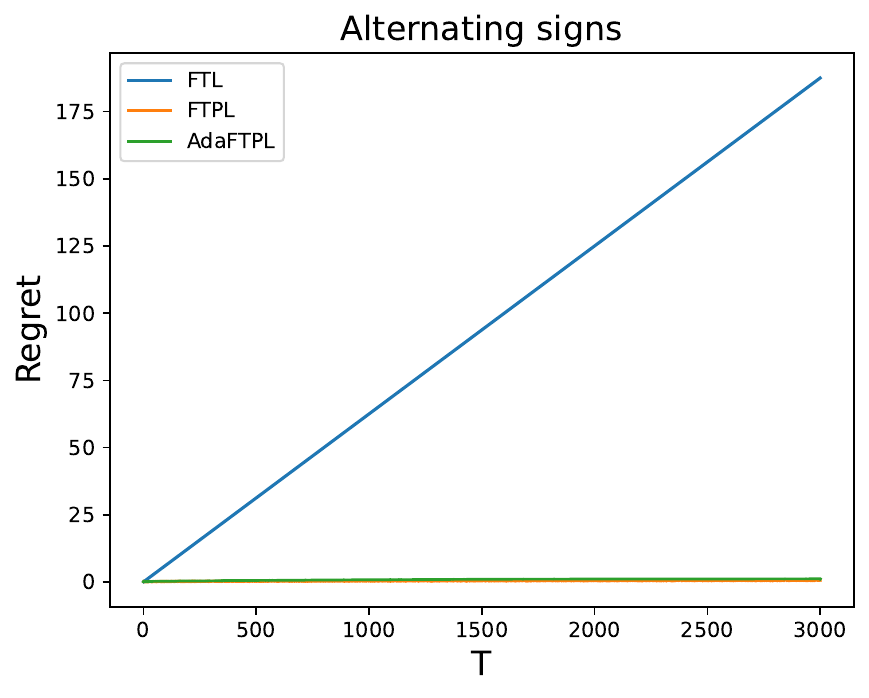}  
	\caption{}
	\label{fig:ftl_interleaving_d=1}
\end{subfigure}
	\caption{(a)-(e) We gradually increase curvature. (f),(g) Curvature is introduced/taken away in the middle. (h) Adversarial sequence that causes FTL to oscillate heavily and incur linear regret.}
	\label{fig:plots-quadratics}
\end{figure}

We perform synthetic experiments on a sequence of quadratic functions, and compare the regret of three algorithms: FTL, FTPL and AdaFTPL. We specifically choose quadratics to have precise control on the cumulative curvatures, and also because the optimization oracle for all the algorithms can be computed exactly in closed form. Concretely, we set $d=10$, and consider $\mcX=[-D/2,D/2]^d$. Each $f_t$ is of the form
\begin{align}
    \label{eqn:form-f-t}
    f_t(x) := \kappa_t \sum_{i=1}^d a_{t,i}(x_i - b_{t,i})^2,
\end{align}
where $\kappa_t \in \{+1,-1\}$ is set to be $+1$ with probability $p_+$, and $-1$ otherwise. Each $a_{t,i} \sim [0,1/2D]$, and $b_{t,i} \sim [-D/2,D/2]$ uniformly at random, which ensures that $f_t$ is 1-Lipschitz on $\mcX$. Note that if $p_+=0$, every $f_t$ is non-convex, and as $p_+$ increases, the cumulative curvature builds up, with all the functions being strongly convex at $p_+=1$. We consider horizons ranging from $T=1$ to $T=3000$. For each horizon $T$, we run each of the algorithms on the sequence $f_1,\dots,f_T$, and compute its final regret. Recall that FTPL as given by \citet{suggala2020online} is simply AdaFTPL with $\rho_t=\rho$ for all $t$. We set $\rho=\sqrt{T}$ as suggested by \citet{suggala2020online}, which guarantees an $O(\sqrt{T})$ regret bound. For AdaFTPL, we set $\rho_t$ according to the noise scale schedule in \Cref{thm:meta-ftl}. %
Figures \ref{fig:ftl_d=10_0.00}-\ref{fig:ftl_d=10_1.00} plot the regret of the algorithms (averaged over 10 random runs) against the time horizon $T$.

We can observe that when $p_+$ is small, all the three algorithms perform similarly. As $p_+$ increases, and curvature starts showing up in the function sequence, AdaFTPL adapts to this, and achieves much better regret than FTPL. In fact, in the cases where $p_+=0.75$ and $p_+=1$, the regret of AdaFTPL grows similarly to the regret of FTL ($\sim \log T$), whereas the regret of FTPL grows significantly worse ($\sim \sqrt{T}$).

To highlight the adaptive behavior of AdaFTPL, we also consider: (1) a sequence where $p_+=0.5$ for the first $1000$ functions, and $p_+=1$ for the rest of the functions, and (2) a sequence where $p_+=1$ for the first $1000$ functions, and $p_+=0.5$ for the rest of the functions. The coefficients $a_{t,i}$ for the first 1000 functions are scaled down; the idea is to either introduce/take away positive curvature in the middle of the sequence, and see if AdaFTPL adapts to it as expected. We can see in Figures \ref{fig:ftl_middle_d=10_0.50} and \ref{fig:ftl_reverse_middle_d=10_0.50} that this is indeed so: in the first case, beyond $T = 1000$, the regret of AdaFTPL grows akin to FTL, and much slower than FTPL. In the latter case, AdaFTPL exploits the available curvature in the first 1000 steps, and thereafter still maintains its conservative regret guarantee.

Finally, we also include an example (Figure \ref{fig:ftl_interleaving_d=1}) of an adversarial sequence of functions (with $d=1$) that has alternating $\kappa_t$ signs, a standard and well-known failure case for FTL. For this sequence, FTL plays a sequence of moves that oscillates a lot, causing linear regret, whereas both FTPL and AdaFTPL, by virtue of their perturbations, play stable moves that result in sublinear regret. This example highlights how FTL fails \textit{drastically} in the worst case, even if it may work well in other problem instances.

\section{Conclusion}
\label{sec:conclusion}

In this work, we derived an Adaptive FTPL algorithm that uses time-varying perturbations, and enjoys regret guarantees that interpolate between $O(\log T)$ and $O(\sqrt{T})$ based on the curvature present in the function sequence. We also obtained an intuitive and efficient method to set the noise scale of the perturbation at each round, as a function of the curvatures of the functions seen so far. We give a matching lower bound construction which shows that our regret guarantees are optimal. Our work thus extends adaptive regret rates in the theory of online optimization significantly beyond online convex optimization.

\subsection*{Acknowledgements}
This work is supported by Moses Charikar's and Gregory Valiant's Simons Investigator Awards, and a Google PhD Fellowship. ChatGPT was routinely used in the development and understanding of ideas used in the paper

\bibliographystyle{plainnat} 
\bibliography{references}

\appendix
\section{Proofs from \Cref{sec:adaptive-ftpl}}
\label{sec:proofs-adaptive-ftl}

We restate and prove \Cref{thm:main}.
\mainTheorem*
\begin{proof}
    By a standard argument for FTPL (e.g., see Lemma 12 in \citet{hutter2005adaptive}, Chapter 4 in \citet{Cesa-Bianchi_Lugosi_2006}), it suffices to consider the setting where the functions are specified upfront by an \textit{oblivious} adversary, and show the regret bound for every fixed $f_1,\dots,f_T$. With obliviously specified functions, it furthermore suffices in \Cref{alg:ftpl} to simply draw a random vector $\sigma=\{\sigma_j\}_{j=1}^d \stackrel{\mathrm{i.i.d.}}{\sim} \text{Exp}(1)$ once at the start and reuse it at every time step, instead of drawing a fresh $\sigma_t$ at each step. With these considerations, our goal is to upper bound the regret in the left-hand side of \eqref{eqn:main-regret-bound} for fixed $f_1,\dots,f_T$, where the expectation is only with respect to $\sigma$ drawn at the start of the game.

    We begin by defining a key quantity in the analysis: for any $t=1,\dots,T$, let $\bar{x}_{t+1}= \mcO_{\alpha, \beta}(F_{t}, \rho_t \sigma)$. Note that $\bar{x}_{t+1}$ is separate from $x_{t+1}=\mcO_{\alpha, \beta}(F_t,\, \rho_{t+1}\sigma)$ that the algorithm plays at time step $t+1$. We introduce $\bar{x}_{t+1}$, which uses the noise vector from the \textit{previous} time step, solely for the sake of analysis. This will help us (1) establish a standard decomposition of the regret into a \textit{stability} and \textit{perturbation} term, and (2) directly utilize the bound on the stability term from \citet{suggala2020online}.

    We now claim that\footnote{We note that setting $\rho_1=\dots=\rho_T=\frac{1}{\eta}$ recovers the bound in \citet{suggala2020online}.}:
    \begin{align}
        \label{eqn:stability-perturbation-decomposition}
        \E_\sigma\left[\sum_{t=1}^T f_t(x_t) - \inf_{x \in \mcX} \sum_{t=1}^T f_t(x)\right] \le \underbrace{L \sum_{t=1}^T \E_\sigma \left[ \left\| x_t - \bar{x}_{t+1}\right\|_1 \right]}_{\text{stability}} + \underbrace{d D\rho_T+ \alpha T +  d\beta \sum_{t=1}^T \rho_t.}_{\text{perturbation}}
    \end{align}
    To see this, fix any $x^\star \in \mcX$. We have that
    \begin{align}
        \sum_{t=1}^T f_t(x_t) - \sum_{t=1}^T f_t(x^\star) &= \sum_{t=1}^T(f_t(x_t)-f_t(\bar{x}_{t+1})) + \sum_{t=1}^T(f_t(\bar{x}_{t+1})-f_t(x^\star)) \nonumber\\
        &\le L \sum_{t=1}^T  \left\|x_t - \bar{x}_{t+1}\right\|_1 + \sum_{t=1}^T(f_t(\bar{x}_{t+1})-f_t(x^\star)),  \label{eqn:breakup}
    \end{align}
    where the inequality follows by the Lipschitzness assumption. We thus proceed to bound the second term in \eqref{eqn:breakup}. With a view to apply the ``be-the-leader'' inequality, define
    \begin{align*}
        g_t(x) := f_t(x) - \langle (\rho_{t}-\rho_{t-1})\sigma, x \rangle,
    \end{align*}
    (where $\rho_0=0$), and observe that
    \begin{align*}
        \sum_{i=1}^t g_i(x) = \sum_{i=1}^t f_i(x) - \langle \rho_t \sigma, x \rangle = F_t(x) - \langle \rho_t \sigma, x \rangle.
    \end{align*}
    So, by the definition of $\bar{x}_{t+1}$, we have that $\bar{x}_{t+1}$ is the (approximate) minimizer of $\sum_{i=1}^t g_i(x)$, where the summation crucially includes $i=t$. Namely, for any $t = 1,\dots,T$,
    \begin{align}
        \sum_{i=1}^t g_i(\bar{x}_{t+1}) &= F_t(\bar{x}_{t+1}) - \langle \rho_t \sigma, \bar{x}_{t+1} \rangle  \nonumber \\
        &\le \inf_{x \in \mcX} \left( F_t(x) - \langle \rho_t \sigma, x \rangle\right) + (\alpha + \beta\|\rho_t\sigma\|_1) = \inf_{x \in \mcX}\sum_{i=1}^t g_i(x) + (\alpha + \beta\|\rho_t\sigma\|_1). \label{eqn:be-the-leader-plugin}
    \end{align}
    We can thus instantiate the standard ``be-the-leader'' inequality to obtain that $\sum_{t=1}^{T} g_t(\bar{x}_{t+1}) \lesssim \inf_{x \in \mcX} \sum_{t=1}^T g_t(x)$. In more detail,
    \begingroup
    \allowdisplaybreaks
    \begin{align*}
        \sum_{t=1}^{T} g_t(\bar{x}_{t+1}) - \inf_{x \in \mcX} \sum_{t=1}^T g_t(x) &\le \sum_{t=1}^{T} g_t(\bar{x}_{t+1}) -\sum_{t=1}^T g_t(\bar{x}_{T+1}) + (\alpha + \beta\|\rho_T \sigma\|_1) \tag{plugging $t=T$ in \eqref{eqn:be-the-leader-plugin}} \\
        &= \sum_{t=1}^{T-1}g_t(\bar{x}_{t+1}) - \sum_{t=1}^{T-1} g_t(\bar{x}_{T+1}) + (\alpha + \beta\|\rho_T \sigma\|_1) \\
        &\le \sum_{t=1}^{T-1}g_t(\bar{x}_{t+1}) - \sum_{t=1}^{T-1} g_t(\bar{x}_{T}) + (\alpha + \beta\|\rho_{T-1} \sigma\|_1) + (\alpha + \beta\|\rho_T \sigma\|_1) \tag{plugging $t=T-1$ in \eqref{eqn:be-the-leader-plugin}} \\
        &= \sum_{t=1}^{T-2}g_t(\bar{x}_{t+1}) - \sum_{t=1}^{T-2} g_t(\bar{x}_{T}) + (\alpha + \beta\|\rho_{T-1} \sigma\|_1) + (\alpha + \beta\|\rho_T \sigma\|_1) \\
        &\;\;\vdots \tag{repeating all the way down}\\
        &\le g_1(\bar{x}_2) - g_1(\bar{x}_3) + \sum_{t=2}^{T}(\alpha + \beta\|\rho_t \sigma\|_1) \\
        &\le \sum_{t=1}^{T}(\alpha + \beta\|\rho_t \sigma\|_1)\tag{plugging $t=1$ in \eqref{eqn:be-the-leader-plugin}} \\
        &= \alpha T + \beta \|\sigma\|_1 \sum_{t=1}^T \rho_t. \tag{each $\rho_t \ge 0$}
    \end{align*}
    \endgroup
    Instantiating the above inequality with $x=x^\star$, we get
    \begin{align*}
        &\sum_{t=1}^{T} g_t(\bar{x}_{t+1}) \le \sum_{t=1}^{T} g_t(x^\star) + \alpha T + \beta \|\sigma\|_1 \sum_{t=1}^T \rho_t \\
        \implies \quad & \sum_{t=1}^T f_t(\bar{x}_{t+1}) - \sum_{t=1}^T\langle (\rho_{t}-\rho_{t-1})\sigma, \bar{x}_{t+1} \rangle \le \sum_{t=1}^T f_t(x^\star) - \sum_{t=1}^T\langle (\rho_{t}-\rho_{t-1})\sigma, x^\star \rangle + \alpha T + \beta \|\sigma\|_1 \sum_{t=1}^T \rho_t \\
        \implies \quad & \sum_{t=1}^T(f_t(\bar{x}_{t+1})-f_t(x^\star)) \le \sum_{t=1}^T\langle (\rho_t-\rho_{t-1})\sigma, \bar{x}_{t+1} - x^\star \rangle + \alpha T + \beta \|\sigma\|_1 \sum_{t=1}^T \rho_t \\
        \implies \quad & \sum_{t=1}^T(f_t(\bar{x}_{t+1})-f_t(x^\star)) \le \sum_{t=1}^T |\rho_t-\rho_{t-1}|\|\sigma\|_1 \|\bar{x}_{t+1} - x^\star\|_\infty + \alpha T + \beta \|\sigma\|_1 \sum_{t=1}^T \rho_t \tag{H\"{o}lder's inequality}\\
        \implies \quad & \sum_{t=1}^T(f_t(\bar{x}_{t+1})-f_t(x^\star)) \le D\|\sigma\|_1\rho_T + \alpha T + \beta \|\sigma\|_1 \sum_{t=1}^T\rho_t,
    \end{align*}
    where we used that $\mcX$ has $\ell_\infty$ diameter $D$, and that $\rho_0,\dots,\rho_T$ are non-negative and non-decreasing, with $\rho_0=0$. Substituting the bound above in \eqref{eqn:breakup}, and taking expectation with respect to $\sigma$, we get
    \begin{align*}
        \E_{\sigma} \left[ \sum_{t=1}^T f_t(x_t) - \sum_{t=1}^T f_t(x^\star)\right] &\le L \sum_{t=1}^T \E_\sigma \left[ \left\| x_t - \bar{x}_{t+1}\right\|_1 \right] + D\rho_T\E_\sigma\left[\|\sigma\|_1\right] + \alpha T +  \E_\sigma\left[\|\sigma\|_1\right] \cdot \beta \sum_{t=1}^T \rho_t \\
        &= L \sum_{t=1}^T \E_\sigma \left[ \left\| x_t - \bar{x}_{t+1}\right\|_1 \right] + d D\rho_T+ \alpha T +  d\beta \sum_{t=1}^T \rho_t,
    \end{align*}
    which was the claimed stability-perturbation decomposition in \eqref{eqn:stability-perturbation-decomposition}.

    Now, since $x_t$ and $\bar{x}_{t+1}$ both use the \textit{same} noise vector $\rho_t \sigma$ (whose entries are i.i.d. exponential random variables with parameter $1/\rho_t$), we can directly use the bound on the stability term established in \citet[Theorem 1]{suggala2020online}. Specifically, they show that:
    \begin{align}
        \E_\sigma \left[ \left\| x_t - \bar{x}_{t+1}\right\|_1 \right] &\le \frac{125Ld^2D}{\rho_t} + \frac{\beta \rho_t d}{20 L} + 2\beta d + \frac{\alpha}{20L}. \tag{NC}
    \end{align}

    On the other hand, we can also bound this term differently by considering the curvature of $F_t=\sum_{i=1}^t f_i$. Namely, let $S_t$ be the strong convexity constant of $F_t$ with respect to $\ell_1$ (where $S_t=0$ if $F_t$ is not strongly convex). Then note that the function $\Phi(x)=F_t(x)-\langle \rho_t\sigma, x\rangle$ also has strong convexity constant $S_t$. 
    
    Let $z_t \in \argmin_{x \in \mcX} \Phi(x)$. Since $\bar{x}_{t+1}$ also approximately minimizes $\Phi(x)$, we have that $\Phi(\bar{x}_{t+1})-\Phi(z_t) \le \alpha + \beta\|\rho_t \sigma\|_1$. Invoking the strong convexity inequality \eqref{eqn:strong-convexity} with $y=\bar{x}_{t+1}$ and $x=z_t$, and using that $\Phi(x)$ is minimized at $z_t$, we get
    \begin{align}
        \Phi(\bar{x}_{t+1}) \ge \Phi(z_t) + \frac{S_t}{2}\|\bar{x}_{t+1}-z_t\|_1^2 &\implies \, \frac{S_t}{2}\|\bar{x}_{t+1}-z_t\|_1^2 \le \Phi(\bar{x}_{t+1}) - \Phi(z_t) \le \alpha + \beta\|\rho_t \sigma\|_1 \nonumber \\
        &\implies \, \|\bar{x}_{t+1}-z_t\|_1 \le \sqrt{\frac{2(\alpha + \beta\|\rho_t \sigma\|_1)}{S_t}}. \label{eqn:triangle-ineq-1}
    \end{align}
    But note also that $x_t$ approximately minimizes $F_{t-1}(x)-\langle \rho_t\sigma, x\rangle=\Phi(x)-f_t(x)$. That is,
    \begin{align*}
        \Phi(x_t)-f_t(x_t) \le \inf_{x \in \mcX} (\Phi(x)-f_t(x)) + \alpha + \beta\|\rho_t \sigma\|_1 \le \Phi(z_t)-f_t(z_t) + \alpha + \beta\|\rho_t \sigma\|_1.
    \end{align*}
    This means that
    \begin{align}
        \label{eqn:lipschitzness-sandwich}
        \Phi(x_t)-\Phi(z_t) \le \alpha + \beta\|\rho_t \sigma\|_1 + f_t(x_t) - f_t(z_t) \le \alpha + \beta\|\rho_t \sigma\|_1 + L\|x_t-z_t\|_1,
    \end{align}
    where we used Lipschitzness in the last inequality. Invoking the strong convexity inequality \eqref{eqn:strong-convexity} again with $y=x_t$ and $x=z_t$, and using that $\Phi(x)$ is minimized at $z_t$, we get
    \begin{align}
        \Phi(x_t) \ge \Phi(z_t) + \frac{S_t}{2}\|x_t-z_t\|_1^2 &\implies \, \frac{S_t}{2}\|x_{t}-z_t\|_1^2 \le \Phi(x_t) - \Phi(z_t)  \nonumber \\
        &\implies \, \frac{S_t}{2}\|x_{t}-z_t\|_1^2 \le \alpha + \beta\|\rho_t \sigma\|_1 + L\|x_t-z_t\|_1 \tag{using \eqref{eqn:lipschitzness-sandwich}} \\
        &\implies \,\|x_t-z_t\|_1 \le \frac{L+\sqrt{L^2+2S_t(\alpha+\beta \|\rho_t \sigma\|_1)}}{S_t} \nonumber \\
        &\implies \,\|x_t-z_t\|_1 \le \frac{2L}{S_t} + \sqrt{\frac{2(\alpha + \beta\|\rho_t \sigma\|_1)}{S_t}}, \label{eqn:triangle-ineq-2}
    \end{align}
    where in the second-to-last step, we solved a quadratic inequality in the variable $\|x_t-z_t\|_1$, and in the last step, we used that $\sqrt{a+b} \le \sqrt{a} + \sqrt{b}$.

    Combining \eqref{eqn:triangle-ineq-1} and \eqref{eqn:triangle-ineq-2} with the triangle inequality, we get that
    \begin{align*}
        \|x_t - \bar{x}_{t+1}\|_1 \le \frac{2L}{S_t} + 2\sqrt{\frac{2(\alpha + \beta\|\rho_t \sigma\|_1)}{S_t}}.
    \end{align*}
    Taking expectation with respect to $\sigma$, we get
    \begin{align*}
        \E_\sigma \left[\left\|x_t - \bar{x}_{t+1}\right\|_1\right] &\le \frac{2L}{S_t} + 2\sqrt{\frac{2}{S_t}}\cdot \E_\sigma\left[ \sqrt{\alpha + \beta\|\rho_t \sigma\|_1}\right] \\
        &\le \frac{2L}{S_t} + 2\sqrt{\frac{2}{S_t}}\cdot \sqrt{\E_\sigma\left[ \alpha + \beta\|\rho_t \sigma\|_1\right]} \tag{Jensen's inequality} \\
        &= \frac{2L}{S_t} + 2\sqrt{\frac{2(\alpha + \beta\rho_t d)}{S_t}}. \tag{SC}
    \end{align*}
    Combining this strongly-convex stability bound (SC) with the fallback non-convex stability bound (NC) from above, and substituting into the stability-perturbation decomposition \eqref{eqn:stability-perturbation-decomposition} of the regret, we get that
    \begin{align*}
         \E_\sigma\left[\sum_{t=1}^T f_t(x_t) - \inf_{x \in \mcX} \sum_{t=1}^T f_t(x)\right] \le &\sum_{t=1}^T \min \left\{\frac{125L^2d^2D}{\rho_t} + \frac{\alpha+\beta \rho_t d}{20} + 2\beta d L, \frac{2L^2}{S_t}+2L\sqrt{\frac{2(\alpha + \beta\rho_t d)}{S_t}}\right\} \nonumber \\
         &\qquad + d D\rho_T+ \alpha T +  d\beta \sum_{t=1}^T \rho_t,
    \end{align*}
    which is the desired bound.
\end{proof}

We next restate and prove \Cref{thm:meta-ftl}.

\metaFTL*
\begin{proof}%
    Recall that we set $\rho_1=r$, and for $t \ge 2$,
    \begin{align*}
        \rho_{t} = \min \arginf_{\rho \ge r} \underbrace{\left[ \sum_{i=1}^{t-1}\min \left\{\frac{125L^2d^2D}{\rho}, \frac{2L^2}{S_i}\right\} + dD\rho\right]}_{:= \Psi_{t-1}(\rho)},
    \end{align*}
    where we defined the objective function inside the parentheses to be $\Psi_{t-1}(\rho)$. %
    It is clear that $\rho_t$ depends only on $S_1,\dots,S_{t-1}$. We will now argue that $\rho_1 \le \dots \le \rho_T$. Note that $r=\rho_1\le \rho_2$ holds immediately, since the minimization is over $\rho \ge r$.
    
    Now, fix $t \ge 3$, and consider any $y \in [r, \rho_{t-1})$. Since $\rho_{t-1}$ is the \textit{smallest} infimizer of $\Psi_{t-2}(\rho)$, it must be the case that
    \begin{align*}
        \Psi_{t-2}(y) > \Psi_{t-2}(\rho_{t-1}).
    \end{align*}
    Therefore,
    \begin{align*}
        \Psi_{t-1}(y) &= \Psi_{t-2}(y) + \min \left\{\frac{125L^2d^2D}{y},\frac{2L^2}{S_{t-1}}\right\} \\
        &> \Psi_{t-2}(\rho_{t-1}) + \min \left\{\frac{125L^2d^2D}{\rho_{t-1}},\frac{2L^2}{S_{t-1}}\right\} = \Psi_{t-1}(\rho_{t-1}),
    \end{align*}
    where we used that the function $\min \left\{\frac{125L^2d^2D}{\rho},\frac{2L^2}{S_{t-1}}\right\}$ is non-increasing in $\rho$. Thus, any feasible minimizer of $\Psi_{t-1}$ must at least be $\rho_{t-1}$, implying that $\rho_{t} \ge \rho_{t-1}$.

    Now, define the \textit{clean} offline optimum, namely the best regret upper bound in hindsight without any error terms, to be
    \begin{align}
        \label{eqn:clean-offline-opt}
        \opt^0_{T}(r) := \inf_{r \le \rho_1 \le \dots \le \rho_T} \left[ \sum_{t=1}^{T}\min \left\{\frac{125L^2d^2D}{\rho_t}, \frac{2L^2}{S_t}\right\} + dD\rho_T\right].
    \end{align}
    We claim that
    \begin{align}
        \label{eqn:clean-offline-opt-collapses}
        \opt^0_{T}(r) = \inf_{\rho \ge r} \left[ \sum_{t=1}^{T}\min \left\{\frac{125L^2d^2D}{\rho}, \frac{2L^2}{S_t}\right\} + dD\rho\right] = \inf_{\rho \ge r} \Psi_T(\rho).
    \end{align}
    This follows simply by noting that, for every feasible $r \le \rho_1 \le \dots \le \rho_T$, we can simply set $\rho_t=\rho_T$ for every $t$; this keeps the sequence feasible, and does not increase any individual term inside the summation.

    \paragraph{Be-the-leader on clean sequence.} We will now argue that the sequence of $\rho_t$'s above has small regret on the sequence of clean objective functions, compared to $\opt^0_{T}(r)$. This is the standard be-the-leader analysis, and proceeds as follows: let $g_0(\rho) := dD\rho$, and $g_t(\rho) = \min \left\{\frac{125L^2d^2D}{\rho}, \frac{2L^2}{S_t}\right\}$ for $t \ge 1$. By definition, for any $i \in \{1,\dots,T\}$:
    \begin{align}
        \label{eqn:ftl-substitution-condition}
       \sum_{t=0}^{i}g_t(\rho_{i+1}) = \Psi_{i}(\rho_{i+1}) = \inf_{\rho \ge r} \Psi_i(\rho),
    \end{align}
     and therefore,
     \begingroup
     \allowdisplaybreaks
     \begin{align*}
        \sum_{t=0}^T g_t(\rho_{t+1}) - \inf_{\rho \ge r} \Psi_T(\rho) &= \sum_{t=0}^T g_t(\rho_{t+1}) - \sum_{t=0}^{T}g_t(\rho_{T+1}) \\
        &= \sum_{t=0}^{T-1} g_t(\rho_{t+1}) - \sum_{t=0}^{T-1}g_t(\rho_{T+1}) \\
        &\le \sum_{t=0}^{T-1} g_t(\rho_{t+1}) - \sum_{t=0}^{T-1}g_t(\rho_{T}) \tag{from \eqref{eqn:ftl-substitution-condition}}\\
        &= \sum_{t=0}^{T-2} g_t(\rho_{t+1}) - \sum_{t=0}^{T-2}g_t(\rho_{T}) \\
        &\;\;\vdots \tag{repeating all the way down}\\
        &\le g_0(\rho_1) - g_0(\rho_2) \\
        &= dD(\rho_1-\rho_2) \\
        &= dD(r-\rho_2) \le 0. \tag{since $\rho_1=r$}
     \end{align*}
     \endgroup
     Thus, we get that
     \begin{align}
        g_0(\rho_1) + \sum_{t=1}^{T}g_t(\rho_{t+1}) - \opt^0_{T}(r) \le 0 \quad \implies\quad \sum_{t=1}^{T}g_t(\rho_{t+1}) \le \opt^0_{T}(r) - dDr. \label{eqn:be-the-leader-on-clean}
     \end{align}

     \paragraph{Actual regret on clean sequence.} Note that \eqref{eqn:be-the-leader-on-clean} benchmarks the performance of playing $\rho_{t+1}$ at time step $t$, but actually, we want to benchmark the performance of playing $\rho_t$. For this, observe that for any $t \ge 1$,
     \begin{align*}
        g_t(\rho_t) - g_t(\rho_{t+1}) &= \min \left\{\frac{125L^2d^2D}{\rho_t}, \frac{2L^2}{S_t}\right\} - \min \left\{\frac{125L^2d^2D}{\rho_{t+1}}, \frac{2L^2}{S_t}\right\} \\
        &\le \frac{125L^2d^2D}{\rho_t} - \frac{125L^2d^2D}{\rho_{t+1}},
     \end{align*}
     where we used that $\rho_t \le \rho_{t+1}$. Summing over $t$, the sum telescopes, and we get
     \begin{align*}
        \sum_{t=1}^T\left(g_t(\rho_t) - g_t(\rho_{t+1})\right) \le \frac{125L^2d^2D}{\rho_1}-\frac{125L^2d^2D}{\rho_{T+1}} \le \frac{125L^2d^2D}{r}.
     \end{align*}
     Substituting this in \eqref{eqn:be-the-leader-on-clean}, we get
     \begin{align*}
        \sum_{t=1}^T g_t(\rho_t) &\le \opt^0_{T}(r) - dDr + \frac{125L^2d^2D}{r}.
     \end{align*}
     Folding in the terminal term $dD\rho_T$ on both sides gives
     \begin{align*}
        dD\rho_T + \sum_{t=1}^T g_t(\rho_t) &\le \opt^0_{T}(r) - dDr + \frac{125L^2d^2D}{r} + dD\rho_T \\
        &\le \opt^0_{T}(r) - dDr + \frac{125L^2d^2D}{r} + dD\rho_{T+1} \\
        &\le \opt^0_{T}(r) - dDr + \frac{125L^2d^2D}{r} + dD\rho_{T+1} + \sum_{t=1}^Tg_t(\rho_{T+1}) \\
        &= 2\opt^0_{T}(r) - dDr + \frac{125L^2d^2D}{r}. \tag{by definition of $\rho_{T+1}$ and $\opt^0_{T}(r)$}
     \end{align*}
     Using $dDr \le \opt^0_{T}(r)$ further gives
     \begin{align}
        \label{eqn:ftl-clean-sequence-regret-ub}
        dD\rho_T + \sum_{t=1}^T g_t(\rho_t) \le C \cdot \opt^0_{T}(r),
     \end{align}
     where $C=\max\left(2, 1+\frac{125L^2d}{r^2}\right)$.

     \paragraph{Relating $\opt^0_{T}(r)$ to $\opt^{\alpha, \beta}_{T}(r)$.} 
     Recall the definition of $\opt^{\alpha, \beta}_{T}(r)$:
     \begin{align*}
        \opt^{\alpha,\beta}_T(r) := \inf_{r \le \rho_1 \le \dots \le \rho_T} &\left[\sum_{t=1}^T \min\left\{\frac{125 L^2 d^2 D}{\rho_t}+\frac{\beta \rho_t d}{20}+2L\beta d + \frac{\alpha}{20}, \frac{2L^2}{S_t}+2L\sqrt{\frac{2(\alpha+\beta \rho_t d)}{S_t}}\right\} \right.\nonumber\\
        &\qquad\qquad\left.+\, dD\rho_T + \alpha T + d\beta \sum_{t=1}^T\rho_t\right],
    \end{align*}
    and note that for every feasible $\rho_1 \le \dots \le \rho_T$, 
    \begin{align*}
        &\sum_{t=1}^T \min\left\{\frac{125 L^2 d^2 D}{\rho_t}+\frac{\beta \rho_t d}{20}+2L\beta d + \frac{\alpha}{20}, \frac{2L^2}{S_t}+2L\sqrt{\frac{2(\alpha+\beta \rho_t d)}{S_t}}\right\} + dD\rho_T + \alpha T + d\beta \sum_{t=1}^T\rho_t \\
        &\ge \sum_{t=1}^T g_t(\rho_t) + dD \rho_T \ge \sum_{t=1}^T g_t(\rho_T) + dD \rho_T \ge \opt^0_{T}(r),
    \end{align*}
    where we used that $g_t$ is non-increasing. Infimizing over all feasible $\rho_1 \le \dots \le \rho_T$, we get that
    \begin{align}
        \label{eqn:opt-alpha-beta-opt-clean-relation}
        \opt^{\alpha,\beta}_T(r) \ge \opt^0_{T}(r).
    \end{align}

    \paragraph{Upper-bounding the regret with respect to $\opt^{\alpha, \beta}_{T}(r)$.} Recall that we upper-bounded the ``clean terms'' of the regret of our $\rho_t$ schedule in terms of the clean offline optimum in \eqref{eqn:ftl-clean-sequence-regret-ub}. Furthermore, we upper-bounded the clean offline optimum with the true offline optimum in \eqref{eqn:opt-alpha-beta-opt-clean-relation}. It remains to fold in the ``non-clean'' terms that appear in the regret of our $\rho_t$ schedule, and upper-bound these in terms of $\opt^{\alpha, \beta}_{T}(r)$.

    Towards this, for $\rho \ge r$, consider the function
    \begin{align*}
        M_t(\rho) := \min\left\{\frac{125 L^2 d^2 D}{\rho}+\frac{\beta \rho d}{20}+2L\beta d + \frac{\alpha}{20}, \frac{2L^2}{S_t}+2L\sqrt{\frac{2(\alpha+\beta \rho d)}{S_t}}\right\}.
    \end{align*}
    Recall that $g_t(\rho) = \min \left\{\frac{125L^2d^2D}{\rho}, \frac{2L^2}{S_t}\right\}$. Suppose it is the case that $\frac{125L^2d^2D}{\rho}\le \frac{2L^2}{S_t}$. Then,
    \begin{align*}
        M_t(\rho) \le \frac{125 L^2 d^2 D}{\rho}+\frac{\beta \rho d}{20}+2L\beta d + \frac{\alpha}{20} = g_t(\rho)+\frac{\beta \rho d}{20}+2L\beta d + \frac{\alpha}{20}.
    \end{align*}
    On the other hand, suppose that $\frac{125L^2d^2D}{\rho} >  \frac{2L^2}{S_t}$. Then,
    \begin{align*}
        M_t(\rho) \le \frac{2L^2}{S_t}+2L\sqrt{\frac{2(\alpha+\beta \rho d)}{S_t}} = g_t(\rho)+2\sqrt{g_t(\rho)(\alpha+\beta \rho d)}.
    \end{align*}
    Therefore, the following bound always holds:
    \begin{align*}
        M_t(\rho) \le g_t(\rho)+\frac{\beta \rho d}{20}+2L\beta d + \frac{\alpha}{20} + 2\sqrt{g_t(\rho)(\alpha+\beta \rho d)}.
    \end{align*}
    Instantiating this inequality for our noise-scale schedule $\rho_1,\dots,\rho_T$, and summing, we get
    \begin{align*}
        \sum_{t=1}^T M_t(\rho_t) &\le \sum_{t=1}^T g_t(\rho_t) + \frac{\beta d}{20}\sum_{t=1}^T \rho_t + 2L\beta d T + \frac{\alpha T}{20} + 2\sum_{t=1}^T \sqrt{g_t(\rho_t)(\alpha+\beta \rho_t d)} \\
        &\le \sum_{t=1}^T g_t(\rho_t) + \frac{\beta d}{20}\sum_{t=1}^T \rho_t + 2L\beta d T + \frac{\alpha T}{20} + 2\sqrt{\left(\sum_{t=1}^Tg_t(\rho_t) \right)\left(\sum_{t=1}^T(\alpha + \beta \rho_t d)\right)}. \tag{Cauchy-Schwarz inequality}
    \end{align*}
    Adding in the perturbation terms from \eqref{eqn:stability-perturbation-decomposition}, we get that
    \begin{align}
        &\sum_{t=1}^T M_t(\rho_t) + dD\rho_T + \alpha T + d\beta \sum_{t=1}^T \rho_t \nonumber \\
        &\le \sum_{t=1}^T g_t(\rho_t) + dD\rho_T + \frac{21\beta d}{20}\sum_{t=1}^T \rho_t + 2L\beta d T + \frac{21\alpha T}{20} + 2\sqrt{\left(\sum_{t=1}^Tg_t(\rho_t) \right)\left(\sum_{t=1}^T(\alpha + \beta \rho_t d)\right)} \nonumber \\
        &\le \underbrace{\sum_{t=1}^T g_t(\rho_t) + dD\rho_T}_{\text{clean regret from \eqref{eqn:ftl-clean-sequence-regret-ub}}} + \frac{21\beta d}{20}\sum_{t=1}^T \rho_t + 2L\beta d T + \frac{21\alpha T}{20} + 2\sqrt{\underbrace{\left(\sum_{t=1}^Tg_t(\rho_t) +dD\rho_T\right)}_{\text{clean regret from \eqref{eqn:ftl-clean-sequence-regret-ub}}}\left(\sum_{t=1}^T(\alpha + \beta \rho_t d)\right)} \nonumber \\
        &\le C \cdot \opt^{\alpha,\beta}_T(r) + \underbrace{\frac{21\beta d}{20}\sum_{t=1}^T \rho_t + 2L\beta d T + \frac{21\alpha T}{20}}_{:=\, E_T} + 2\sqrt{C \cdot \opt^{\alpha,\beta}_T(r)\left(\sum_{t=1}^T(\alpha + \beta \rho_t d)\right)} \nonumber \\
        &\le C \cdot \opt^{\alpha,\beta}_T(r) + E_T + 2\sqrt{C \cdot \opt^{\alpha,\beta}_T(r)\cdot E_T}. \label{eqn:regret-upper-bound-with-error-terms}
    \end{align}
    \paragraph{Bounding $E_T$.} Finally, we aim to upper-bound $E_T$ in terms of $\opt^{\alpha,\beta}_T(r)$. We have
    \begin{align*}
        E_T &= \frac{21\beta d}{20}\sum_{t=1}^T \rho_t + 2L\beta d T + \frac{21\alpha T}{20} \\
        &\le \frac{21\beta d T \rho_T}{20} + 2L\beta d T + \frac{21\alpha T}{20} \tag{since $\rho_t$ is non-decreasing}.
    \end{align*}
    Now, observe that $\alpha T \le \opt^{\alpha,\beta}_T(r)$. Furthermore, from \eqref{eqn:ftl-clean-sequence-regret-ub} and \eqref{eqn:opt-alpha-beta-opt-clean-relation},
    \begin{align*}
        dD\rho_T \le dD\rho_T + \sum_{t=1}^T g_t(\rho_t) \le C \cdot \opt^{0}_T(r) \le C \cdot \opt^{\alpha,\beta}_T(r).
    \end{align*}
    Finally, since it also holds that $dDr \le \opt^{\alpha,\beta}_T(r)$, we have that $L\beta d T \le \frac{L\beta T}{Dr}\cdot \opt^{\alpha,\beta}_T(r)$. Combining all this, we get
    \begin{align*}
        E_T &\le \left(\frac{21C\beta T}{20D}  + \frac{2L\beta T}{Dr} + \frac{21}{20}\right)\cdot \opt^{\alpha,\beta}_T(r).
    \end{align*}

    \paragraph{Plugging in $\beta=O(1/T)$.}
    Note that $\beta \le c/T$ suffices to get $E_T \le C' \cdot \opt^{\alpha,\beta}_T(r)$, where $C'$ is a constant depending on $L, D, d, r$ and $c$. Finally, plugging this back in \eqref{eqn:regret-upper-bound-with-error-terms}, we get
    \begin{align*}
        \sum_{t=1}^T M_t(\rho_t) + dD\rho_T + \alpha T + d\beta \sum_{t=1}^T \rho_t \le C'' \cdot \opt^{\alpha,\beta}_T(r),
    \end{align*}
    where $C''$ is a constant depending on $L, D, d, r$ and $c$. Recalling that the quantity on the left-hand side above is precisely the upper-bound on regret shown in \Cref{thm:main} completes the proof.
\end{proof}

\regimeSpecificRates*
\begin{proof}%
    In view of \Cref{thm:meta-ftl}, it suffices to upper-bound $\opt^{\alpha, \beta}_T(r)$, where
     \begin{align*}
        \opt^{\alpha,\beta}_T(r) := \inf_{r \le \rho_1 \le \dots \le \rho_T} &\left[\sum_{t=1}^T \min\left\{\frac{125 L^2 d^2 D}{\rho_t}+\frac{\beta \rho_t d}{20}+2L\beta d + \frac{\alpha}{20}, \frac{2L^2}{S_t}+2L\sqrt{\frac{2(\alpha+\beta \rho_t d)}{S_t}}\right\} \right.\nonumber\\
        &\qquad\qquad\left.+\, dD\rho_T + \alpha T + d\beta \sum_{t=1}^T\rho_t\right].
    \end{align*}
    Choose $\rho_t=\rho$ for all $t$. By taking the non-convex branch in the min, we have that
    \begin{align*}
        \opt^{\alpha,\beta}_T(r) &\le \sum_{t=1}^T\left(\frac{125 L^2 d^2 D}{\rho}+\frac{\beta \rho d}{20}+2L\beta d + \frac{\alpha}{20}\right) + dD\rho + \alpha T + d\beta T \rho \\
        &=  \frac{125 L^2 d^2 DT}{\rho}+\frac{21\beta \rho dT}{20}+2L\beta dT + dD\rho + \frac{21\alpha T}{20}  \\
        &= O\left(\frac{T}{\rho}+\rho\right) \tag{substituting $\alpha, \beta = O(1/T)$}.
    \end{align*}
    Setting $r=1$ and $\rho = \sqrt{T}$
    gives $\opt^{\alpha,\beta}_T(r) \le O(\sqrt{T})$.

    Now, suppose that $S_t \ge t^\gamma$. When $\gamma \le 1/2$, we already have the $O(\sqrt{T})$ regret bound from above.
    
    \paragraph{Regime $\frac12 < \gamma < 1$.}
    Choose $r=1$ and $\rho_t=\rho=1$ for all $t$. By taking the strongly-convex branch in the min, we have that
    \begin{align*}
        \opt^{\alpha,\beta}_T(r) &\le \sum_{t=1}^T\left(\frac{2L^2}{S_t}+2L\sqrt{\frac{2(\alpha+\beta \rho d)}{S_t}}\right) + dD\rho + \alpha T + d\beta T \rho \\
        &\le O\left(\sum_{t=1}^T t^{-\gamma}+ T^{-1/2}\sum_{t=1}^Tt^{-\gamma/2}\right) \\
        &\le O \left(T^{1-\gamma} + T^{1/2-\gamma/2}\right) = O(T^{1-\gamma}).
    \end{align*}

    \paragraph{Regime $\gamma = 1$.}
    Choose $r=1$ and $\rho_t=\rho=1$ for all $t$. By taking the strongly-convex branch in the min, we have that
    \begin{align*}
        \opt^{\alpha,\beta}_T(r) &\le \sum_{t=1}^T\left(\frac{2L^2}{S_t}+2L\sqrt{\frac{2(\alpha+\beta \rho d)}{S_t}}\right) + dD\rho + \alpha T + d\beta T \rho \\
        &= O\left(\sum_{t=1}^T\frac{1}{t}+T^{-1/2}\sum_{t=1}^Tt^{-1/2}\right) = O(\log T).
    \end{align*}

    \paragraph{Regime $\gamma > 1$.}
    Choose $r=1$ and $\rho_t=\rho=1$ for all $t$. By taking the strongly-convex branch in the min, we have that
        \begin{align*}
            \opt^{\alpha,\beta}_T(r) &\le \sum_{t=1}^T\left(\frac{2L^2}{S_t}+2L\sqrt{\frac{2(\alpha+\beta \rho d)}{S_t}}\right) + dD\rho + \alpha T + d\beta T \rho \\
            &= O\left(\sum_{t=1}^Tt^{-\gamma} + T^{-1/2}\sum_{t=1}^Tt^{-\gamma/2}\right) = O(1). \qedhere
        \end{align*}
\end{proof}

\section{Proofs from \Cref{sec:lower-bound}}
\label{sec:proofs-lower-bound}

We restate and prove \Cref{thm:lower-bound}.

\lowerBound*
\paragraph{Setup.} Fix $\mcX$ to be the interval $[-D/2, D/2] \subseteq \R$. Let $0=s_0 \le s_1 \le \dots \le s_T$ be the given sequence of cumulative curvatures. For any $t \in [T]$, let
\begin{align*}
    a_t := s_t - s_{t-1} \ge 0.
\end{align*}
The lower bound of \Cref{thm:lower-bound} holds for any sequence satisfying that $a_t \le L/2D$ for all $t$. As will become clearer, this condition is used solely to ensure that the functions used in the lower bound are all $L$-Lipschitz on $\mcX$.

Recall that 
\begin{align}
    \mfR_T(s_{1:T}) := \inf_{\mcA} \sup_{\substack{f_1,\ldots,f_T\text{ } L\text{-Lipschitz}\text{ on }\mcX \\ S_t=s_t\text{ for all }t}}
  \E_{\mcA} \left[\regret_T(f_{1:T};x_{1:T})\right].
\end{align}
and we seek to show that
\begin{align}
    \label{eqn:lb-want-to-show}
     \mfR_T(s_{1:T}) &\ge c \cdot \inf_{\rho \ge 0} \left[ L^2\sum_{t=1}^{T}\min \left\{\frac{D}{\rho}, \frac{1}{s_t}\right\} + D\rho\right].
\end{align}
It will be convenient to instead lower bound $\mfR_T(s_{1:T})$ by the quantity:
\begin{align}
    \label{eqn:regularized-form}
    \inf_{\gamma\ge 0}
    \left[
        D^2\gamma
        +
        L^2\sum_{t=1}^T\frac{1}{s_t+\gamma}
    \right].
\end{align}
Indeed, this will be sufficient, due to the following claim:
\begin{claim}
    \label{claim:regularized-clipped-equivalence}
    For every $s\ge0$ and $\gamma \ge 0$,
    \[
        \frac12\min\left\{\frac1s,\frac1\gamma\right\}
        \le
        \frac1{s+\gamma}
        \le
        \min\left\{\frac1s,\frac1\gamma\right\},
    \]
    with the convention $1/0=\infty$.
\end{claim}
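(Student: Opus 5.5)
The claim is elementary, and I would prove it by reducing to the maximum of $s$ and $\gamma$. Writing $m:=\max\{s,\gamma\}$, we have $\min\{1/s,1/\gamma\}=1/m$ under the convention $1/0=\infty$, since $x\mapsto 1/x$ is non-increasing on $[0,\infty]$. So the claim is equivalent to
\[
\frac{1}{2m}\le \frac{1}{s+\gamma}\le \frac{1}{m}.
\]

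The key observation is the sandwich $m\le s+\gamma\le 2m$, which holds for any non-negative $s,\gamma$. Indeed, $s+\gamma\ge\max\{s,\gamma\}$ because both terms are non-negative. Also $s+\gamma\le 2\max\{s,\gamma\}$ because each summand is at most the maximum.

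Taking reciprocals, which reverses the inequalities, gives the two claimed bounds when $m>0$.

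The only point needing care is the degenerate case $s=\gamma=0$. Here $m=0$, and every quantity equals $+\infty$ under the convention, so both inequalities hold trivially as $\infty\le\infty$. When exactly one of $s,\gamma$ is zero, $m>0$ and the argument above applies unchanged. Note that the convention is used only in identifying $\min\{1/s,1/\gamma\}$ with $1/m$.

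There is no real obstacle here; the only thing to check is this bookkeeping with the $1/0=\infty$ convention.
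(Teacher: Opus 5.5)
Your proof is correct and follows essentially the same route as the paper: both rest on the sandwich $\max\{s,\gamma\}\le s+\gamma\le 2\max\{s,\gamma\}$ followed by taking reciprocals. Your explicit handling of the degenerate case $s=\gamma=0$ under the $1/0=\infty$ convention is bookkeeping that the paper leaves implicit.
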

\begin{proof}
    The upper bound follows because $s+\gamma\ge s$ and $s+\gamma\ge\gamma$. The lower bound follows because $s+\gamma\le 2\max\{s,\gamma\}$.
\end{proof}
Hence, if we lower-bound $\mfR_T(s_{1:T})$ in terms of the quantity in \eqref{eqn:regularized-form}, the desired lower bound in \eqref{eqn:lb-want-to-show} will follow by the change of variable $\gamma=\rho/D$.

We thus proceed to lower-bound $\mfR_T(s_{1:T})$ by $\inf_{\gamma\ge 0}
\left\{
        D^2\gamma
        +
        L^2\sum_{t=1}^T\frac{1}{s_t+\gamma}
\right\}$, and divide the proof into 8 steps:

\subsection*{Step 1: Defining the hard distribution for a fixed $\gamma$}

Fix an arbitrary deterministic learner. Also, fix $\gamma \ge 0$ which satisfies:
\begin{enumerate}
    \item \label{item:all-denominators-positive} $s_t + \gamma > 0$ for every $t$.
    \item \label{item:variance-bound} 
    $H_2(\gamma)
    :=
    \sum_{t=1}^T \frac{1}{(s_t+\gamma)^2} \le \frac{D^2}{L^2}$
\end{enumerate}
We will later choose an optimal value of $\gamma$ which satisfies these conditions.

Let $\eps_1,\dots,\eps_T$ be independent Rademacher random variables in $\{-1, +1\}$. Set 
\begin{align*}
    g := \frac{L}{8}.
\end{align*}

The functions $f_1,\dots,f_T$ will be a sequence of carefully chosen quadratics, where each $f_t$ has curvature $a_t$ (and since $a_t \ge 0$, each quadratic is convex). The centers and linear tilts $\ell_t$ of each $f_t$ are specified by the following process. 

Initialize $z_0=0$, and declare the process to be active. For each round $t=1,2,\dots,T$:
\begin{enumerate}
    \item If the process is active before round $t$, set $\ell_t=g\eps_t$; if it has been declared inactive,
    set $\ell_t=0$.
    \item Set
    \begin{equation}\label{eq:hard-loss}
        f_t(x)
        :=
        \frac{a_t}{2}(x-z_{t-1})^2
        -
        \ell_t(x-z_{t-1}).
    \end{equation}
    \item Update
    \begin{equation}\label{eq:center-update}
        z_t=z_{t-1}+\frac{\ell_t}{s_t+\gamma}.
    \end{equation}
    \item If the process was active before round $t$ and $|z_t|>D/4$, declare the process inactive.
\end{enumerate}
Observe that once the process above becomes inactive, all future linear tilts are zero and the center stays constant. Note also that all the functions $f_1,\dots,f_T$ are determined as a function of $\gamma$ and the sequences $s_1,\dots,s_T$ and $\eps_1,\dots,\eps_T$.

\subsection*{Step 2: Feasibility of centers and functions}
We first argue that all the centers $z_0,\dots,z_T$ belong to $\mcX=[-D/2,D/2]$. For this, note that assuming the bound on $H_2(\gamma)$ given in \Cref{item:variance-bound}, the magnitude of each update satisfies
\[
    \left|\frac{\ell_t}{s_t+\gamma}\right|
    \le
    g\sqrt{H_2(\gamma)}
    \le
    \frac{L}{8}\cdot\frac{D}{L}
    =
    \frac{D}{8}.
\]
Now observe that $z_0=0$, and for any $t$ where the process is active before round \(t\), we have \(|z_{t-1}|\le D/4\). Furthermore, if at round $t$, $|z_t|$ becomes larger than $D/4$ causing the process to become inactive, we have 
\[
    |z_t| \le |z_{t-1}| + D/4 \le
    D/4 + D/8
    =
    3D/8
    <
    D/2.
\]
Thereafter, the center remains fixed, and hence \(z_t\in \mcX\) for all $t \in \{0,1,\dots,T\}$.

We now argue that all the functions are feasible. Observe that even if the process has been declared inactive, the quadratic terms with coefficients $a_t$ are still
played. Thus, for every round,
\[
    f_t''(x)=a_t\ge0,
\]
so $f_t$ is convex. The cumulative loss $F_t=\sum_{i=1}^t f_i$ satisfies
\[
    F_t''(x)=\sum_{i=1}^t a_i=s_t,
\]
so the cumulative curvature is exactly the prescribed value $s_t$.

Since all the centers lie in $\mcX$, for every $x\in \mcX$,
\[
    |f_t'(x)|
    =
    |a_t(x-z_{t-1})-\ell_t|
    \le
    a_tD+|\ell_t|.
\]
Since $a_t \le L/2D$ and $|\ell_t| \le g=L/8$,
\begin{equation}\label{eq:lipschitz-bound}
    |f_t'(x)|
    \le
    \frac{L}{2}+\frac{L}{8}
    <L.
\end{equation}
Thus, every $f_t$ is convex,
$L$-Lipschitz on $\mcX$, and has the required cumulative curvature schedule.

\subsection*{Step 3: The learner's expected cumulative loss is nonnegative}

For any $t$, observe that conditioned on $\eps_1,\dots,\eps_{t-1}$, the learner's move $x_t$ is determined (since it is deterministic). Furthermore, the center $z_{t-1}$, as well as whether the process remains active/has been declared inactive before round $t$ is also determined. We consider the two cases separately.

\paragraph{Case 1: The process is active before round $t$.}
Then $\ell_t = g\eps_t$, where $\eps_t$ is independent of $\eps_1,\dots,\eps_{t-1}$, and has mean zero.
Hence,
\[
    \E[\ell_t \mid \eps_1,\dots,\eps_{t-1}] = 0.
\]
By definition of $f_t$ \eqref{eq:hard-loss},
\[
    \E[f_t(x_t)\mid \eps_1,\dots,\eps_{t-1}]
    =
    \frac{a_t}{2}(x_t - z_{t-1})^2
    \ge 0.
\]

\paragraph{Case 2: The process has become inactive before round \(t\).}
Then \(\ell_t = 0\), so
\[
    f_t(x_t)
    =
    \frac{a_t}{2}(x_t - z_{t-1})^2
    \ge 0,
\]
and therefore
\[
    \E[f_t(x_t)\mid \eps_1,\dots,\eps_{t-1}] \ge 0.
\]
In both cases,
\[
    \E[f_t(x_t)\mid \eps_1,\dots,\eps_{t-1}] \ge 0.
\]
Thus, we get that
\begin{align}
    \E\left[\sum_{t=1}^T f_t(x_t)\right] &= \sum_{t=1}^T \E[f_t(x_t)]
    = \sum_{t=1}^T \E_{\eps_1,\dots,\eps_{t-1}} \E[f_t(x_t)\mid \eps_1,\dots,\eps_{t-1}] \ge 0. \label{eq:learner-nonnegative}
\end{align}

\subsection*{Step 4: A regularized cumulative loss that has an exact quadratic invariant}

Define the fictitiously regularized cumulative loss
\[
    G_t(x):=F_t(x)+\frac{\gamma}{2}x^2.
\]
We claim that, for all $t \ge 0$,
\begin{equation}\label{eq:quadratic-invariant}
    G_t(x)
    =
    G_t(z_t)
    +
    \frac{s_t+\gamma}{2}(x-z_t)^2,
\end{equation}
and for $t \ge 1$,
\begin{equation}\label{eq:min-drop}
    G_t(z_t)
    =
    G_{t-1}(z_{t-1})
    -
    \frac{\ell_t^2}{2(s_t+\gamma)}.
\end{equation}
To see this, observe that for $t=0$, $G_0(x)=\frac{\gamma}{2}x^2$, and since $z_0=0$ and $s_0=0$, \eqref{eq:quadratic-invariant} holds. Now, let $t \ge 1$, and suppose that \eqref{eq:quadratic-invariant} holds for $t-1$. Then, we have that
\begin{align*}
    G_t(x)
    &=G_{t-1}(x)+f_t(x) \\
    &=G_{t-1}(z_{t-1})
      +\frac{s_{t-1}+\gamma}{2}(x-z_{t-1})^2
      +\frac{a_t}{2}(x-z_{t-1})^2
      -\ell_t(x-z_{t-1}) \tag{by definition of $f_t$}\\
    &=G_{t-1}(z_{t-1})
      +\frac{s_t+\gamma}{2}(x-z_{t-1})^2
      -\ell_t(x-z_{t-1}). \tag{using that $a_t = s_t -s_{t-1}$} \\
    &= G_{t-1}(z_{t-1})
      +\frac{s_t+\gamma}{2}\left[(x-z_{t-1}-\frac{\ell_t}{s_t+\gamma}\right]^2-\frac{\ell_t^2}{2(s_t+\gamma)}. \tag{completing the square}
\end{align*}
Substituting $x=z_t$, and recalling the center update from \eqref{eq:center-update} gives \eqref{eq:min-drop}. Thereafter, substituting the expression for $G_t(z_t)$ from \eqref{eq:min-drop} back in the above gives \eqref{eq:quadratic-invariant}.

\subsection*{Step 5: Telescoping the regularized cumulative loss}

Telescoping \eqref{eq:min-drop} gives
\begin{equation}\label{eq:telescoped-drop}
    G_T(z_T)
    =
    -\sum_{t=1}^T\frac{\ell_t^2}{2(s_t+\gamma)}.
\end{equation}
Since $z_T\in \mcX$, and using that $F_T(x)=G_T(x)-(\gamma/2)x^2\le G_T(x)$,
\[
    \inf_{x\in \mcX}F_T(x)
    \le
    F_T(z_T)
    \le
    G_T(z_T).
\]
Therefore,
\begin{equation}\label{eq:offline-gain}
    -\inf_{x\in \mcX}F_T(x)
    \ge - G_{T}(z_T) = 
    \sum_{t=1}^T\frac{\ell_t^2}{2(s_t+\gamma)}.
\end{equation}
Taking expectation, and combining \eqref{eq:learner-nonnegative}, \eqref{eq:offline-gain}, we get
\begin{equation}\label{eq:regret-gain-fixed-gamma}
    \E\left[\regret_T(f_{1:T};x_{1:T})\right]
    \ge
    \E\left[\sum_{t=1}^T\frac{\ell_t^2}{2(s_t+\gamma)}\right].
\end{equation}
Note that the sole randomness in the expectations above is from $\eps_1,\dots,\eps_T$. %

\subsection*{Step 6: Process remains active throughout with constant probability}

Observe that the centers $z_0, z_1,\dots,z_T$ follow a random process, and if the process is active before round $t$,
\[
    z_t
    =
    \sum_{i=1}^t \frac{\ell_i}{s_i+\gamma}.
\]
Consider a coupling $(z_t,m_t)$, where $m_t=z_t$ up until the time the process becomes inactive, but thereafter, $m_t$ continues to evolve as above even after $z_t$ has become constant. That is, for every $t$,
\begin{align*}
    m_t
    =
    \sum_{i=1}^t \frac{\ell_i}{s_i+\gamma}.
\end{align*}
Note that each term in the sum above has mean $0$. Furthermore, the variance of $m_T$ is
\begin{align*}
    \Var[m_T] = \E[m_T^2] &= g^2\sum_{t=1}^T\frac{1}{(s_t+\gamma)^2}.
\end{align*}
We are now in a position to invoke Kolmogorov's maximal inequality \citep[Theorem 22.4]{billingsley2017probability}.

\begin{proposition}[Kolmogorov's maximal inequality]
    \label{proposition:kolmogorov}
    Let $m_t=\sum_{i=1}^t x_i$, where $x_1,\ldots,x_T$ are independent mean-zero random variables with finite variance. Then, for every $r>0$,
    \[
      \Pr\left[\max_{1\le t\le T}|m_t|\ge r\right]
      \le
      \frac{\E[m_T^2]}{r^2}.
    \]
\end{proposition}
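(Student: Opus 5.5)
The plan is the classical first-passage (stopping-time) decomposition: split the event $\{\max_{1\le t\le T}|m_t|\ge r\}$ according to the \emph{first} index at which the partial sum reaches level $r$, then lower-bound $\E[m_T^2]$ on each piece using independence of future increments.

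\textbf{Step 1 (first-passage decomposition).} For $k=1,\dots,T$ define
\[
A_k := \left\{|m_k|\ge r \text{ and } |m_j|<r \text{ for all } j<k\right\}.
\]
These events are pairwise disjoint, and their union is exactly $\{\max_{1\le t\le T}|m_t|\ge r\}$. Each $A_k$ is determined by $x_1,\dots,x_k$ alone. Since $m_T^2\ge 0$, we have
\[
\E[m_T^2]\ge \sum_{k=1}^T \E\left[m_T^2\mathbf 1_{A_k}\right].
\]

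\textbf{Step 2 (expanding each term).} Write $m_T=m_k+(m_T-m_k)$. This gives
\[
\E[m_T^2\mathbf 1_{A_k}] = \E[m_k^2\mathbf 1_{A_k}] + 2\,\E[m_k\mathbf 1_{A_k}(m_T-m_k)] + \E[(m_T-m_k)^2\mathbf 1_{A_k}].
\]
The last term is nonnegative, so we drop it. For the cross term, note two facts. First, $m_k\mathbf 1_{A_k}$ is a function of $x_1,\dots,x_k$. Second, $m_T-m_k=\sum_{i>k}x_i$ is a function of $x_{k+1},\dots,x_T$. By independence, the cross term factors as $\E[m_k\mathbf 1_{A_k}]\cdot\E[m_T-m_k]$, and this equals $0$ because each $x_i$ has mean zero.

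\textbf{Step 3 (conclusion).} On $A_k$ we have $m_k^2\ge r^2$, so $\E[m_k^2\mathbf 1_{A_k}]\ge r^2\Pr[A_k]$. Summing over $k$ and using disjointness gives
\[
\E[m_T^2]\ge r^2\Pr\left[\max_{t\le T}|m_t|\ge r\right],
\]
which is the claim after dividing by $r^2$.

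\textbf{Main obstacle.} The only delicate point is justifying the factorization of the cross term, which requires integrability. By Cauchy--Schwarz, $\E\left|m_k\mathbf 1_{A_k}(m_T-m_k)\right|\le \sqrt{\E[m_k^2]\,\E[(m_T-m_k)^2]}<\infty$, and this is finite because every $x_i$ has finite variance. Independence then permits the product formula for expectations.

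In the application to Step 6, the increments are $x_i=\ell_i/(s_i+\gamma)$ with $\ell_i=g\eps_i$ for the coupled process $m_t$. These are independent, mean zero, and bounded, so the hypotheses of the proposition hold directly.
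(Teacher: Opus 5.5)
Your proof is correct and is the classical first-passage argument. The paper does not prove this proposition itself; it cites Billingsley (Theorem 22.4), whose proof uses the same steps: the disjoint events $A_k$, dropping the $(m_T-m_k)^2$ term, and killing the cross term by independence and the mean-zero increments.
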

By \Cref{proposition:kolmogorov} applied to our sequence $m_1,\dots,m_T$, we get that
\begin{equation}
    \label{eq:kolmogorov-fixed-gamma}
    \Pr\left[\max_{1\le t\le T}|m_t|\ge D/4\right]
    \le
    \frac{16g^2}{D^2}\sum_{t=1}^T\frac{1}{(s_t+\gamma)^2} \le \frac{1}{4},
\end{equation}
where we used the assumed bound from \Cref{item:variance-bound} on $H_2(\gamma)$, and recalled that $g=L/8$. Thus, with probability at least $3/4$, the sequence $m_t$ never exceeds $D/4$ in absolute value. But on this event, $m_t$ coincides with $z_t$. We therefore conclude that with probability at least $3/4$, the process stays active throughout, i.e., every $\ell_t = g \eps_t$; let $A$ denote this event. Recalling the lower bound on regret from \eqref{eq:regret-gain-fixed-gamma}, we get
\begin{align}
    \E\left[\regret_T(f_{1:T};x_{1:T})\right]
    &\ge
    \E\left[\sum_{t=1}^T\frac{\ell_t^2}{2(s_t+\gamma)}\right] \nonumber \\
    &= \Pr[A] \cdot \E\left[\sum_{t=1}^T\frac{\ell_t^2}{2(s_t+\gamma)} \Bigm| A\right] + \underbrace{\Pr[\neg A] \cdot \E\left[\sum_{t=1}^T\frac{\ell_t^2}{2(s_t+\gamma)} \Bigm| \neg A\right]}_{\ge 0} \nonumber \\
    &\ge \frac{3L^2}{512} \sum_{t=1}^T\frac{1}{s_t + \gamma}. \label{eqn:regret-lb-regularized-terms}
\end{align}

\subsection*{Step 7: Choosing $\gamma$}

We now choose $\gamma$ appropriately, so that both \Cref{item:all-denominators-positive,item:variance-bound} hold. Let
\begin{align*}
        \Psi(\gamma) := D^2\gamma
        +
        L^2\sum_{t=1}^T\frac{1}{s_t+\gamma}, \qquad \gamma \ge 0.
\end{align*}
We claim that $\inf_{\gamma \ge 0} \Psi(\gamma)$ is attained at some $\gamma_\star \ge 0$, and $\gamma_\star$ satisfies both the required conditions (hence, we will choose $\gamma=\gamma_\star$). To see this, note first that 
\begin{align*}
    \Psi(2L/D) &= 2LD + L^2\sum_{t=1}^T\frac{1}{s_t+2L/D} < \infty,
\end{align*}
so let $B := \Psi(2L/D) < \infty$. Note that $B \ge 2LD$. Then, for any $\gamma > \frac{B+1}{D^2}$,
\begin{align*}
    \Psi(\gamma) &\ge D^2\gamma > D^2\left(\frac{B+1}{D^2}\right) = B+1 > B.
\end{align*}
Thus, if the infimum $\inf_{\gamma \ge 0} \Psi(\gamma)$ is attained at all, it is attained in $[0,\frac{B+1}{D^2}]$, which includes $2L/D$.

Now note that if every $s_t > 0$, then $\Psi(\gamma)$ is continuous on $[0,\frac{B+1}{D^2}]$. However, if any $s_t=0$, then the term $\frac{1}{s_t+\gamma}$ blows up at $\gamma=0$. Nevertheless, note in this case that for any $\gamma > 0$,
\begin{align*}
    \Psi(\gamma) \ge \frac{L^2}{s_t+\gamma} = \frac{L^2}{\gamma}.
\end{align*}
So, if we set $\delta := \frac{L^2}{B+1}$, then for any $0 < \gamma < \delta$,
\begin{align*}
    \Psi(\gamma) \ge \frac{L^2}{\gamma} > \frac{L^2}{\delta} = B+1.
\end{align*}
Furthermore, $\delta = \frac{L^2}{B+1} \le \frac{L^2}{2LD+1}\le \frac{L^2}{2LD} < 2L/D$. Thus, in this case, if the infimum $\inf_{\gamma \ge 0} \Psi(\gamma)$ is attained at all, it is attained in $[\delta,\frac{B+1}{D^2}]$. Furthermore, $\Psi$ is continuous on this interval.

So, define the closed and bounded interval $K \subseteq [0, \infty)$ as
\begin{align}
    K = \begin{cases}
        \left[0, \frac{B+1}{D^2}\right] & \text{if $s_t > 0$ for all $t$}, \\
        \left[\delta, \frac{B+1}{D^2}\right] & \text{otherwise}.
    \end{cases}
\end{align}
Then, it always holds that
\begin{itemize}
    \item $2L/D \in K$.
    \item For any $\gamma \ge 0$ such that $\gamma \notin K$, $\Psi(\gamma) > \Psi(2L/D)$.
    \item $\Psi$ is continuous on $K$.
\end{itemize}
Then, by the extreme value theorem, there exists $\gamma_\star \in K$ such that
\begin{align*}
    \Psi(\gamma_\star) = \min_{\gamma \in K} \Psi(\gamma) \le \Psi(2L/D).
\end{align*}
Together with the properties of $K$ above, we have shown that $\gamma_\star$ attains $\inf_{\gamma \ge 0} \Psi(\gamma)$.

We now check \Cref{item:all-denominators-positive,item:variance-bound} for $\gamma=\gamma_\star$. Note that if $s_t > 0$ for all $t$, it immediately holds that $s_t +\gamma_\star > 0$ for all $t$. On the hand, if some $s_t=0$, then $\gamma_\star > 0$, and we again get that $s_t +\gamma_\star > 0$ for all $t$. Thus, \Cref{item:all-denominators-positive} holds.

Now, for $\gamma > 0$, we have that
\begin{align*}
    \Psi'(\gamma) = D^2 -L^2\sum_{t=1}^T\frac{1}{(s_t+\gamma)^2} = D^2-L^2H_2(\gamma).
\end{align*}
If $\gamma_\star > 0$, by the first-order optimality condition, we have that
\begin{align}
    \label{eqn:first-order-optimality}
    H_2(\gamma_\star) = \frac{D^2}{L^2}.
\end{align}
On the other hand, if $\gamma_\star = 0$, the right derivative of $\Psi$ at $\gamma=0$ must be non-negative, which gives
\begin{align*}
    H_2(\gamma_\star) \le \frac{D^2}{L^2}
\end{align*}
Thus, we have shown that \Cref{item:variance-bound} holds.

\subsection*{Step 8: Putting things together, and concluding}

Recalling the regret lower bound \eqref{eqn:regret-lb-regularized-terms}, and plugging in $\gamma_\star$, we have that
\begin{align*}
    \E\left[\regret_T(f_{1:T};x_{1:T})\right] &\ge \frac{3L^2}{512} \sum_{t=1}^T\frac{1}{s_t + \gamma_\star}.
\end{align*}
We now wish to further lower bound the right-hand side above with $\Psi(\gamma_\star)$. For this, suppose that $\gamma_\star=0$. In this case,
\begin{align*}
    \Psi(\gamma_\star) = L^2\sum_{t=1}^T \frac{1}{s_t+\gamma_\star}. %
\end{align*}
On the other hand, if $\gamma_\star > 0$, we have by the first-order optimality condition \eqref{eqn:first-order-optimality} that
\begin{align*}
    &D^2 = L^2\sum_{t=1}^T\frac{1}{(s_t+\gamma_\star)^2} \\
    \quad\implies& D^2\gamma_\star = L^2\sum_{t=1}^T\frac{\gamma_\star}{(s_t+\gamma_\star)^2} = L^2\sum_{t=1}^T\frac{1}{s_t+\gamma_\star}\cdot \frac{\gamma_\star}{s_t+\gamma_\star} \le L^2\sum_{t=1}^T\frac{1}{s_t+\gamma_\star},
\end{align*}
and hence,
\begin{align*}
    \Psi(\gamma_\star) = D^2\gamma_\star + L^2\sum_{t=1}^T \frac{1}{s_t+\gamma_\star} \le 2L^2\sum_{t=1}^T \frac{1}{s_t+\gamma_\star}.
\end{align*}
In either case, we are guaranteed that $\Psi(\gamma_\star) \le 2L^2\sum_{t=1}^T \frac{1}{s_t+\gamma_\star}$; substituting this in our regret bound above, we get
\begin{align}
    \label{eqn:deterministic-alg-lower-bound}
    \E\left[\regret_T(f_{1:T};x_{1:T})\right] &\ge \frac{3}{1024}\Psi(\gamma_\star) = \frac{3}{1024}\cdot \inf_{\gamma \ge 0} \left[
        D^2\gamma
        +
        L^2\sum_{t=1}^T\frac{1}{s_t+\gamma}
    \right].
\end{align}

The bound above holds for every deterministic learning algorithm, over the randomness of $\eps_1,\dots,\eps_T$. This implies a corresponding lower bound for any randomized algorithm, by the standard Yao argument \citep{yao1983lower}. Concretely, let $\mcA$ be any randomized algorithm. Conditioned on any fixed realization $r$ of its randomness, the algorithm $\mcA_r$ is deterministic. Thus, we can invoke the lower bound in \eqref{eqn:deterministic-alg-lower-bound} for $\mcA_r$, to get
\begin{align*}
    \E_{\eps_{1:T}}\left[\regret_{\mcA_r, T}(f_{1:T};x_{1:T})\right] &\ge \frac{3}{1024}\cdot \inf_{\gamma \ge 0} \left[
        D^2\gamma
        +
        L^2\sum_{t=1}^T\frac{1}{s_t+\gamma}
    \right].
\end{align*}
Averaging over the randomness $r$ then gives that
\begin{align*}
    \E_r\E_{\eps_{1:T}}\left[\regret_{\mcA_r, T}(f_{1:T};x_{1:T})\right] = &\E_{\eps_{1:T}} \E_r\left[\regret_{\mcA_r, T}(f_{1:T};x_{1:T})\right] \ge \frac{3}{1024}\cdot \inf_{\gamma \ge 0} \left[
        D^2\gamma
        +
        L^2\sum_{t=1}^T\frac{1}{s_t+\gamma}
    \right].
\end{align*}
In particular, this implies the existence of a fixed sequence $\eps_1,\dots,\eps_T$, and corresponding functions $f_1,\dots,f_T$, such that
\begin{align*}
    \E_r\left[\regret_{\mcA_r, T}(f_{1:T};x_{1:T})\right] \ge \frac{3}{1024}\cdot \inf_{\gamma \ge 0} \left[
        D^2\gamma
        +
        L^2\sum_{t=1}^T\frac{1}{s_t+\gamma}
    \right].
\end{align*}
Since the lower bound above holds for every randomized learning algorithm $\mcA$, we get
\begin{align*}
    \mfR_T(s_{1:T}) &\ge \frac{3}{1024}\cdot \inf_{\gamma \ge 0} \left[
        D^2\gamma
        +
        L^2\sum_{t=1}^T\frac{1}{s_t+\gamma}
    \right] \\
    &\ge c \cdot \inf_{\rho \ge 0} \left[ L^2\sum_{t=1}^{T}\min \left\{\frac{D}{\rho}, \frac{1}{s_t}\right\} + D\rho\right]. \tag{using \Cref{claim:regularized-clipped-equivalence}}
\end{align*}
This concludes the proof of \Cref{thm:lower-bound}. \qed

\subsection{Optimality of regime-specific rates}
\label{sec:regime-specific-lower-bounds}

Here, we argue that the rates obtained by our algorithm in the different curvature in \Cref{corollary:precise-regimes} are optimal.

Recall from \Cref{thm:lower-bound} that the minimax regret is lower bounded (upto constant factors) by $\opt^0_{T}(0)$, where
\begin{align*}
    \opt^0_{T}(0) \ge c \cdot \underbrace{\inf_{\rho \ge 0} \left[ \sum_{t=1}^{T}\min \left\{\frac{1}{\rho}, \frac{1}{S_t}\right\} + \rho\right].}_{:= \Psi(\rho)}
\end{align*}
Here, the constant $c$ hides a dependence on $d, D, L$. It thus suffices to calculate $\Psi(\rho)$ in the regimes where $S_t=t^\gamma$ (for $\gamma \ge 1/2$), and show a matching characterization as that in \Cref{corollary:precise-regimes}.

Consider 
\begin{align*}
    \Psi(\rho)= \inf_{\rho \ge 0} \left[ \sum_{t=1}^{T}\min \left\{\frac{1}{\rho}, \frac{1}{S_t}\right\} + \rho\right],
\end{align*}
for $S_t = t^\gamma$.

Towards this, we first further characterize the precise form of $\Psi(\rho)$. For $0 \le t < T$, consider the intervals $I_t = [t^\gamma, (t+1)^\gamma)$, and let $I_T=[T^\gamma, \infty)$. For any $\rho \in I_t$, observe that $\Psi(\rho)$ evaluates to
\begin{align*}
    \frac{t}{\rho} + \rho + \sum_{t < i \le T} i^{-\gamma}.
\end{align*}
Note that the last term is constant for every $\rho \in I_t$. Thus, the ``unconstrained'' minimizer for this interval balances the first two terms, giving 
\begin{align*}
    \hat{\rho}_t = \sqrt{t}.
\end{align*}
This is a valid value for $\hat{\rho}_t$, provided $\sqrt{t} \in I_t$. However, if $\sqrt{t} \notin I_t$, we simply set $\hat{\rho}_t$ to be either $t^\gamma$ or $(t+1)^{\gamma}$, based on whichever of the two evaluates to a smaller $\Psi(\rho)$. Thus, we get that
\begin{align}
    \label{eqn:precise-form-opt-0}
    \inf_{\rho \ge 0} \Psi(\rho) = \min_{0 \le t \le T} \left(\frac{t}{\hat{\rho}_t}+\hat{\rho}_t + \sum_{t < i \le T} i^{-\gamma}\right).
\end{align}
When $\gamma < 1/2$, it may be verified that the minimum above gets achieved at $t=T$ (for large enough $T$), whereby $\hat{\rho}_T=\sqrt{T}$, giving that the quantity in \eqref{eqn:precise-form-opt-0} is at least $T/\sqrt{T}+\sqrt{T}=\Omega(\sqrt{T})$. When $\gamma \ge 1/2$, the minimum in \eqref{eqn:precise-form-opt-0} gets achieved at $t=0$ as $\hat{\rho}_0 \to 0$, whereby the quantity in \eqref{eqn:precise-form-opt-0} is at least $\sum_{i=1}^Ti^{-\gamma}$, which gives the claimed lower bounds on the rates in each regime of $\gamma$.

\subsection{Relating $\opt^0_{T}(r)$ and $\opt^0_{T}(0)$}
\label{sec:relating-opt-0-r-to-opt-0-0}

\begin{claim}
    \label{claim:relating-opt-0-r-to-opt-0-0}
    Recall the definition of $\opt^0_{T}(r)$:
    \begin{align*}
        \opt^0_{T}(r) = \inf_{\rho \ge r} \left[ \sum_{t=1}^{T}\min \left\{\frac{125L^2d^2D}{\rho}, \frac{2L^2}{S_t}\right\} + dD\rho\right].
    \end{align*}
    For any $r \ge 0$, it holds that
    \begin{align*}
        \opt^0_{T}(r) &\le \opt^0_{T}(0) + dDr.
    \end{align*}
\end{claim}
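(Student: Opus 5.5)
The plan is a direct comparison argument: take any feasible $\rho$ for $\opt^0_{T}(0)$ and produce a feasible point for $\opt^0_{T}(r)$ whose objective is at most $dDr$ larger. Write $\Psi_T(\rho) = \sum_{t=1}^T \min\{125L^2d^2D/\rho,\, 2L^2/S_t\} + dD\rho$, so that $\opt^0_T(r) = \inf_{\rho\ge r}\Psi_T(\rho)$ and $\opt^0_T(0)=\inf_{\rho\ge 0}\Psi_T(\rho)$. The convention $1/0=+\infty$ makes $\Psi_T(0)$ well defined in the extended sense, so the infimum over $\rho \ge 0$ may include $\rho = 0$.

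Fix an arbitrary $\rho \ge 0$ and set $\rho' := \max\{\rho, r\}$, which is feasible for $\opt^0_T(r)$. I would split into two cases.

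\emph{Case $\rho \ge r$.} Then $\rho'=\rho$, and
\[
\Psi_T(\rho') = \Psi_T(\rho) \le \Psi_T(\rho) + dDr .
\]

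\emph{Case $\rho < r$.} Then $\rho'=r$. Each summand $\min\{125L^2d^2D/\rho,\, 2L^2/S_t\}$ is non-increasing in $\rho$, and this monotonicity was already used in the proof of \Cref{thm:meta-ftl}. Hence the sum evaluated at $r$ is at most the sum evaluated at $\rho$. The linear term satisfies $dDr \le dD\rho + dDr$. Together these give $\Psi_T(r) \le \Psi_T(\rho) + dDr$.

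In both cases $\opt^0_T(r) \le \Psi_T(\rho') \le \Psi_T(\rho) + dDr$. Taking the infimum over $\rho\ge 0$ yields $\opt^0_T(r)\le \opt^0_T(0)+dDr$.

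I do not expect any real obstacle. The only point needing care is the degenerate value $\rho=0$, where the first branch equals $+\infty$ and the min collapses to $2L^2/S_t$, which may itself be $+\infty$. Monotonicity in the extended reals still holds there. Moreover, if $\Psi_T(0)=+\infty$ the inequality for that $\rho$ is trivial, so the argument goes through unchanged.
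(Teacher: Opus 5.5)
Your proposal is correct and is essentially the paper's proof. Both replace an arbitrary $\rho\ge 0$ by $\max\{\rho,r\}$, use that each clipped term is non-increasing in $\rho$ and that $dD\max\{\rho,r\}\le dD\rho+dDr$, and then take the infimum over $\rho\ge 0$; your two-case split just unpacks the paper's single inequality.
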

\begin{proof}
    For any $\rho \ge 0$, let $\tilde{\rho}=\max(r, \rho)$. Then observe that for any $t$,
    \begin{align*}
        \min \left\{\frac{125L^2d^2D}{\tilde{\rho}}, \frac{2L^2}{S_t}\right\} &\le \min \left\{\frac{125L^2d^2D}{\rho}, \frac{2L^2}{S_t}\right\}.
    \end{align*}
    Furthermore,
    \begin{align*}
        dD\tilde{\rho} \le dD\rho + dDr.
    \end{align*}
    Therefore,
    \begin{align*}
        \sum_{t=1}^{T}\min \left\{\frac{125L^2d^2D}{\tilde{\rho}}, \frac{2L^2}{S_t}\right\} + dD\tilde\rho &\le  \sum_{t=1}^{T}\min \left\{\frac{125L^2d^2D}{\rho}, \frac{2L^2}{S_t}\right\} + dD\rho + dDr.
    \end{align*}
    Infimizing over $\rho \ge 0$ gives,
    \begin{align*}
        &\inf_{\rho \ge 0}\left[\sum_{t=1}^{T}\min \left\{\frac{125L^2d^2D}{\tilde{\rho}}, \frac{2L^2}{S_t}\right\} + dD\tilde\rho\right] \le  \inf_{\rho \ge 0}\left[\sum_{t=1}^{T}\min \left\{\frac{125L^2d^2D}{\rho}, \frac{2L^2}{S_t}\right\} + dD\rho + dDr\right] \\
        \implies & \inf_{\rho \ge r}\left[\sum_{t=1}^{T}\min \left\{\frac{125L^2d^2D}{\rho}, \frac{2L^2}{S_t}\right\} + dD\rho\right] \le  \inf_{\rho \ge 0}\left[\sum_{t=1}^{T}\min \left\{\frac{125L^2d^2D}{\rho}, \frac{2L^2}{S_t}\right\} + dD\rho\right] + dDr \\
        \implies & \opt^0_{T}(r) \le \opt^0_{T}(0) + dDr. \qedhere
    \end{align*}
\end{proof}

\end{document}